\definecolor{porange}{HTML}{E77500} %
\newtheorem{theorem}{Theorem}
\newtheorem{lemma}{Lemma}    
\newtheorem{remark}{Remark}
\newtheorem{proposition}{Proposition}    
\newtheorem{assumption}{Assumption}    
\newtheorem{definition}{Definition}
\newtheorem*{proposition*}{Proposition}
\newtheorem*{theorem*}{Theorem}
\definecolor{porange}{HTML}{E77500} %
\newcommand{\jaime}[1]{\ifthenelse{\boolean{include-notes}}{\textcolor{orange}{\textbf{Jaime:} #1}}{}}
\newcommand{\haimin}[1]{\ifthenelse{\boolean{include-notes}}{\textcolor{magenta}{\textbf{Haimin:} #1}}{}}
\newcommand{\justin}[1]{\ifthenelse{\boolean{include-notes}}{\textcolor{Cerulean}{\textbf{Justin:} #1}}{}}
\newcommand{\himani}[1]{\ifthenelse{\boolean{include-notes}}{\textcolor{Plum}{\textbf{Nishanth:} #1}}{}}
\newcommand{\david}[1]{\ifthenelse{\boolean{include-notes}}{\textcolor{teal}{\textbf{David:} #1}}{}}
\newcommand{\remove}[1]{\ifthenelse{\boolean{include-remove}}{\textcolor{red}{\sout{#1}}}{}}
\newcommand{\new}[1]{\ifthenelse{\boolean{highlight-new}}{\textcolor{blue}{#1}}{#1}}
\newcommand{\todo}[1]{\ifthenelse{\boolean{include-notes}}{\textcolor{blue}{\textbf{TODO:} #1}}{}}
\newcommand{\guy}[1]{\ifthenelse{\boolean{include-notes}}{\textcolor{Cerulean}{\textbf{Guy:} #1}}{}}
\newcommand{\flag}[1]{\ifthenelse{\boolean{include-notes}}{\textcolor{red}{#1}}{#1}}
\newcommand{\princeton}[1]{\ifthenelse{\boolean{include-notes}}{\textcolor{orange}{#1}}{}}
\newcommand{\p}[1]{\smallskip \noindent \textbf{{#1}.}}
\newcommand{\eg}{\textit{e.g.}}
\newcommand{\ie}{\textit{i.e.}}
\newcommand{\failureset}{{\mathcal{F}}}
\newcommand{\failure}{{\failureset}}
\newcommand{\support}{\operatorname{supp}}
\newcommand{\st}{\textnormal{s.t.}}
\newcommand{\state}{{s}}
\newglossaryentry{RL}
{
  name={RL},
  description={reinforcement learning},
  first={reinforcement learning (\glsentrytext{RL})}
}
\newglossaryentry{HJ}
{
  name={HJ},
  description={Hamilton--Jacobi},
  first={Hamilton--Jacobi (\glsentrytext{HJ})}
}
\newglossaryentry{DCBF}
{
  name={DCBF},
  description={discrete-time control barrier function},
  first={discrete-time control barrier function (\glsentrytext{DCBF})}
}
\newglossaryentry{CBF}
{
  name={CBF},
  description={control barrier function},
  first={control barrier function (\glsentrytext{CBF})}
}
\newglossaryentry{Q-CBF}
{
  name={Q-CBF},
  description={state--action control barrier function},
  first={state--action control barrier function (\glsentrytext{Q-CBF})}
}
\newglossaryentry{ODD}
{
  name={ODD},
  description={Operational Design Domain},
  first={operational design domain (\glsentrytext{ODD})}
}
\newglossaryentry{LRSF}
{
  name={LRSF},
  description={Last-Resort Safety Filter},
  first={last-resort safety filter (\glsentrytext{LRSF})}
}
\newglossaryentry{HCSF}
{
  name={HCSF},
  description={Human-Centered Safety Filter},
  first={human-centered safety filter (\glsentrytext{HCSF})}
}
\newglossaryentry{NLP}
{
  name={NLP},
  description={nonlinear programming problem},
  first={nonlinear programming problem (\glsentrytext{NLP})},
}
\newglossaryentry{ILQR}
{
  name={ILQR},
  description={iterative linear quadratic regulator},
  first={iterative linear quadratic regulator (\glsentrytext{ILQR})},
}
\newglossaryentry{MPC}
{
  name={MPC},
  description={model predictive control},
  first={model predictive control (\glsentrytext{MPC})},
}
\newglossaryentry{AC}
{
  name={AC},
  description={Assetto Corsa},
  first={Assetto Corsa (\glsentrytext{AC})},
}
\newglossaryentry{SAC}
{
  name={SAC},
  description={Soft Actor--Critic},
  first={soft actor--critic (\glsentrytext{SAC})},
}
\newglossaryentry{ANOVA}
{
    name={ANOVA},
    description={Analysis of Variance},
    first={analysis of variance (\glsentrytext{ANOVA})}
}
\newglossaryentry{SME}
{
    name={SME},
    description={Simple Main Effects},
    first={simple main effects (\glsentrytext{SME})},
}
\newglossaryentry{HSD}
{
    name={HSD},
    description={Honestly Significant Difference},
    first={honestly significant difference (\glsentrytext{HSD})},
}
\newglossaryentry{ECDF}
{
    name={ECDF},
    description={Empirical Cumulative Distribution Function},
    first={empirical cumulative distribution function (\glsentrytext{ECDF})},
}
\newglossaryentry{OCP}
{
    name={optimal control problem},
    description={Optimal Control Problem},
    first={optimal control problem}
}
\newglossaryentry{AI}
{
    name={AI},
    description={Artificial Intelligence},
    first={artificial intelligence (\glsentrytext{AI})}
}
\newglossaryentry{HRI}
{
    name={HRI},
    description={Human--Robot Interaction},
    first={human--robot interaction (\glsentrytext{HRI})}
}
\newglossaryentry{MDP}
{
    name={MDP},
    description={Markov Decision Process},
    first={Markov decision process (\glsentrytext{MDP})}
}
\newglossaryentry{SC-MDP}
{
    name={SC-MDP},
    description={Safety-Critical Markov Decision Process},
    first={safety-critical Markov decision process (\glsentrytext{SC-MDP})}
}
\newacronym[longplural={constrained Markov decision processes}]{CMDP}{CMDP}{constrained Markov decision process}
\newglossaryentry{DP}
{
    name={DP},
    description={Dynamic Programming},
    first={dynamic programming (\glsentrytext{DP})}
}
\newglossaryentry{CPO}
{
    name={CPO},
    description={Constrained Policy Optimization},
    first={Constrained Policy Optimization (\glsentrytext{CPO})}
}
\newglossaryentry{RCPO}
{
    name={RCPO},
    description={Reward Constrained Policy Optimization},
    first={Reward Constrained Policy Optimization (\glsentrytext{RCPO})}
}
\newacronym{MPSF}{MPSF}{model predictive safety filter}
\newacronym{ISAACS}{ISAACS}{Iterative Soft Adversarial Actor–Critic for Safety}
\newacronym{QP}{QP}{quadratic program}
\newacronym[longplural={Gaussian processes}]{GP}{GP}{Gaussian process}
\newacronym{CVaR}{CVaR}{conditional value-at-risk}
\title{Provably Optimal Reinforcement Learning\\under Safety Filtering}
\author{
  Donggeon David Oh%
  \thanks{D. D. Oh and D. P. Nguyen contributed equally.}%
  \hspace{0.4em}\thanks{Corresponding author: \texttt{do9948@princeton.edu}}%
  \hspace{0.4em}\thanks{Department of Electrical and Computer Engineering, Princeton University}%
  \And
  Duy P.~Nguyen%
  \footnotemark[1]\hspace{0.4em}\footnotemark[3]
  \And
  Haimin Hu%
  \footnotemark[3]\hspace{0.4em}\thanks{Department of Computer Science, Johns Hopkins University}
  \And
  Jaime F.~Fisac%
  \footnotemark[3]
}
\begin{document}

\maketitle
\setcounter{footnote}{0}
\begin{abstract}

Recent advances in \gls{RL} enable its use on increasingly complex tasks, but the lack of formal safety guarantees still limits its application in safety-critical settings.
A common practical approach is to augment the RL policy with a \emph{safety filter} that overrides unsafe actions to prevent failures during both training and deployment. However, safety filtering is often perceived as sacrificing performance and hindering the learning process.
We show that this perceived safety--performance tradeoff is not inherent and prove, for the first time, that enforcing safety with a sufficiently permissive safety filter does \emph{not} degrade asymptotic performance. We formalize RL safety with a \emph{safety-critical Markov decision process (SC-MDP)}, which requires categorical, rather than high-probability, avoidance of catastrophic failure states. Additionally, we define an associated \emph{filtered MDP} in which all actions result in safe effects, thanks to a safety filter that is considered to be a part of the environment.
Our main theorem establishes that (i) learning in the filtered MDP is safe categorically, (ii) standard RL convergence carries over to the filtered MDP, and (iii) any policy that is optimal in the filtered MDP—when executed through the same filter—achieves the \emph{same} asymptotic return as the best safe policy in the SC-MDP, yielding a complete separation between safety enforcement and performance optimization. We validate the theory on Safety Gymnasium with representative tasks and constraints, observing zero violations during training and final performance matching or exceeding unfiltered baselines. Together, these results shed light on a long-standing question in safety-filtered learning and provide a simple, principled recipe for safe RL: train and deploy RL policies with the most permissive safety filter that is available.
\end{abstract}

\section{Introduction}
\label{sec:intro}
\Glsfirst{RL} has demonstrated outstanding performance in complex domains, yet a fundamental limitation is the lack of strict safety guarantees, both during policy training and at deployment.
This poses a major barrier to deploying \gls{RL} in safety-critical applications, from autonomous driving to healthcare management,
where even a single safety violation would be catastrophic.
One promising approach towards safety assurances in RL is to integrate a \textit{safety filter}~\cite{hsu2024safety} that monitors the system and overrides any unsafe candidate actions to preempt downstream failures.
In effect, the \gls{RL} agent can explore and learn \textit{within} a safe set, while the filter ensures that it never enters unsafe regions.

However, enforcing hard safety constraints during training is commonly observed to negatively interfere with the learning process and limit the asymptotic performance achievable by the learned policy~\cite{gros2020safe, anderson2020neurosymbolic, yang2023safe, koller2019learning}.
In addition, prior studies on safety filtering in control systems~\cite{leung2020infusing,hu2024active,bejarano2024safety} have reported that if the task policy is unaware of the presence of a safety filter, it may repeatedly attempt unsafe actions that keep triggering filter overrides, resulting in suboptimal oscillations or ``chattering'' behaviors.
In such cases, the separation between safety and performance is incomplete: the task policy must be made explicitly aware of the task-agnostic safety filter in order to avoid costly overrides.
 
These concerns raise a fundamental, unsettled question in safe \gls{RL}: \textit{Does safety filtering inevitably hinder the agent’s learning and ultimately limit the achievable performance?}
We provide, for the first time, a theoretical answer to this question by proving that no such entanglement between safety filtering and task policy training is necessary to obtain an optimal constrained policy.
We view this as a new foundational result in safe \gls{RL}, with significant practical implications as AI technology advances towards increasingly complex autonomous systems.

\textbf{Contributions.} 
\textit{Our key insight is that safety and performance in \gls{RL} can be learned separately and still converge towards provably safe and task-optimal agent behavior.}
We prove that, under a safety filter that is least-restrictive yet capable of preventing all safety violations, an \gls{RL} agent trained in the filtered environment converges to the \emph{optimal} safety-constrained policy; in particular, the convergence guarantees enjoyed by \gls{RL} algorithms on stationary, discounted \glspl{MDP} with bounded rewards \emph{carry over} to this setting~\cite{watkins1992q, tsitsiklis1994asynchronous, konda1999actor, haarnoja2018soft}.
Remarkably, we show—\emph{for the first time}—that the \emph{safety–performance separation} is complete in safe \gls{RL}: the task policy can be trained entirely agnostic to the safety filter and still attain the \emph{same} asymptotic performance as if it had been trained with the safety constraints in mind.
The safety filter guarantees that the agent never leaves the safe set, wherein the \gls{RL} algorithm, free of the extra burden of handling safety, optimizes the task performance.
To complement our theoretical findings, we empirically validate the safe \gls{RL} principle using the Safety Gymnasium benchmark~\cite{ji2023safety}, a modern successor to OpenAI’s Safety Gym~\cite{OpenAI2019SafetyGym}, across different tasks, constraints, and environments.

\begin{figure}[t]
    \centering
    \includegraphics[width=\textwidth]{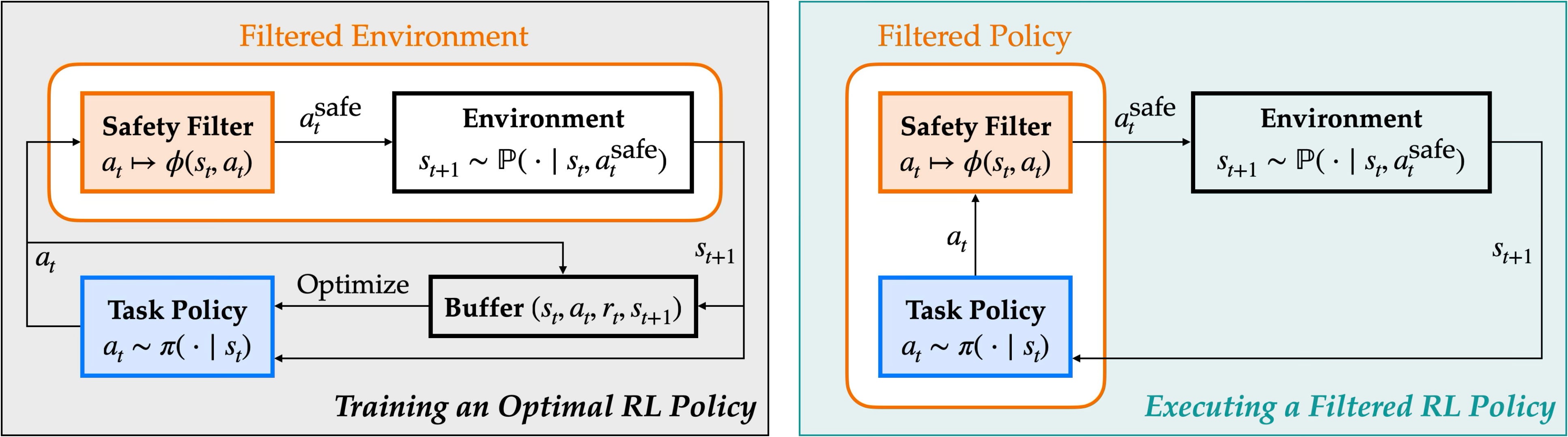}
    \caption{Our proposed framework for training and deploying optimal \gls{RL} policies under safety filtering. First, we train a filter-agnostic task policy in a safety-enforcing environment where every action is passed through a safety filter.
    Then, we deploy the learned task policy with the same filter at runtime. The filtered \gls{RL} policy provably achieves the \emph{same asymptotic performance} as it had been trained with the safety constraints in mind.}
    \label{fig:control_diagram}
    \vspace{-3mm}
\end{figure}

\section{Background and Related Work}
\label{sec:rel_work}

\paragraph{Constrained reinforcement learning.}
Conventionally, \gls{RL} methods 
model safety 
through
a \gls{CMDP}~\cite{altman1999constrained}, where the agent aims to maximize its return subject to a budget on \emph{expected cumulative costs} incurred in failure states. 
Prominent \gls{CMDP}-based RL approaches
include trust-region methods such as \gls{CPO}~\cite{achiam2017constrained}, Lagrangian/primal--dual methods (including PID–Lagrangian stabilizations)~\cite{stooke2020responsive}, reward–penalty variants such as \gls{RCPO}~\cite{tessler2018reward}, and 
Lyapunov-based 
projected
policy optimization%
~\cite{chow2018lyapunov}.
Some methods depart from the standard \gls{CMDP} formulation
by imposing a cap on expected cost at each individual time step~\cite{li2021augmented}
or considering risk measures like \gls{CVaR}
rather than expected costs~\cite{chow2018risk}.
While effective at 
enforcing
some \emph{statistical measure} of safety,
the above approaches generally \emph{permit failures} along individual trajectories.
By contrast, our formulation considers truly safety-critical systems where catastrophic failures are \textit{categorically} unacceptable, 
and it consequently requires
their probability to be exactly zero.\footnote{%
Our problem can be viewed as a non-generic \textit{limiting class} of \glspl{CMDP} where the allowable budget is exactly zero, or costs are infinite. The \emph{opposite} limiting class is \glspl{MDP}, where the budget is infinite or costs are zero.
}
\looseness=-1

\paragraph{Safety filters.}
A safety filter is an automatic process that 
continually monitors an agent's proposed actions and, when necessary, modifies them to avoid entering no-win scenarios from which future failures may be inescapable.
These techniques are popular in robotics and control, where safety is
more often formalized as all-time satisfaction of state constraints~\cite{hsu2024safety}.
Representative classes include (i) \glspl{MPSF}, which verify system safety at runtime by forward-simulating a dynamics model or solving a trajectory optimization problem~\cite{wabersich2018linear, wabersich2021predictive, bastani2021safe}, (ii) \gls{CBF}-based filters, which 
perform runtime optimization 
at each control cycle to 
smoothly slow down any approach to the boundary of a known safe set~\cite{ames2016control, ames2019control, robey2020learning, oh2023safety}, and (iii) reachability-based filters, which first solve 
a Bellman/\gls{HJ} equation to compute the 
largest possible safe set
and then keep the agent from exiting it
at runtime~\cite{mitchell2005time, bansal2017hamilton}. 
Recently,
neural approximations of 
the \textit{safety value function}~\cite{fisac2019bridging, bansal2021deepreach, hsu2023isaacs}
have enabled scalable synthesis and deployment of reachability-based filters
beyond the reach of numerical tools, from robotic walking and manipulation to air traffic control and high-speed car racing~\cite{nguyen2025gameplay, wang2024magics, nakamura2025generalizing, choi_aloor_li_2025safemarl, oh2025safety}.

\paragraph{Safety-filtered reinforcement learning.} In safety-critical settings, safety filters can be used during and after \gls{RL} training to detect and modify unsafe candidate actions before the agent executes them in the environment.
Although related, safety-filtered \gls{RL} differs from shielded \gls{RL}~\cite{alshiekh2018safe,carr2023safe}: shields are synthesized over (finite-state) abstractions, whereas safety filters are defined directly on the original (often continuous-state) \gls{MDP} via controlled invariance.
Prior works in safety-filtered \gls{RL} have adopted \gls{CBF} safety filters not only to guarantee safety but also to guide learning by constraining the set of explorable policies~\cite{cheng2019end}; this idea also extends to robust \glspl{CBF} that handle model uncertainty and disturbances~\cite{emam2022safe}.
\gls{MPSF} enforces constraints via short-horizon prediction during learning, enabling zero or minimal violations in data collection and improved sample efficiency~\cite{wabersich2021predictive,bastani2021safe, bejarano2024safety}.
Reachability-based approaches include iteratively estimating the safe set (and sometimes a fallback) to override unsafe proposed actions online~\cite{ganai2023iterative}, as well as provably safe action projection using reachability analysis and polynomial zonotopes during training~\cite{kochdumper2023provably}.
Across these lines, empirical results report that \gls{RL} under safety filtering improves exploration efficiency by preventing safety violations and thus \emph{accelerates} the convergence of task policies~\cite{bejarano2024safety}.
However, these works ultimately fall short of providing \emph{optimality} guarantees for the executed (filtered) policy with respect to the safety-critical \gls{MDP}.
Recent works study optimality of \gls{RL} under a safety monitor~\cite{hunt2021verifiably} or a safeguard~\cite{markgraf2025safe}, concepts similar to safety filters; however, they do not characterize the \emph{maximal} controlled-invariant safe set, which prevents them from establishing that their safe \gls{RL} algorithms result in optimal asymptotic performance given a safety-critical \gls{MDP}.
We suspect it is this lack of a formal optimality guarantee that contributes to the common misconception that safety filtering hinders the attainable task performance of \gls{RL} policies~\cite{gros2020safe, anderson2020neurosymbolic, yang2023safe, koller2019learning}. 
Our results fill this gap by showing that, under a sufficiently permissive safety filter, the optimality of asymptotic task performance is preserved.

\paragraph{Safe model-based learning.}
Orthogonal to safety-critical model-free \gls{RL}, a rich line of literature focuses on learning (or refining) system \emph{dynamics} models and certifying safety during training model-based \gls{RL} policies.  
SafeOpt~\cite{sui2015safe} ensures safety while optimizing an unknown function modeled as a \gls{GP}, using confidence bounds to assess the safety of unexplored decisions.
In a complementary direction, Akametalu et al.~\cite{akametalu2014reachability} propose to reduce safety filter conservativeness by learning the system's unknown dynamics, treated as a disturbance input, with a \gls{GP}, and integrating the relaxed safety constraints into \gls{RL}.  
Similarly, Berkenkamp et al.~\cite{berkenkamp2017safe} and Richards et al.~\cite{richards2018lyapunov} learn Lyapunov certificates (and associated regions of attraction) from data and leverage them to enforce safe exploration.
Another line of work uses \gls{MPC} as the safety mechanism around learned dynamics—typically via (i) uncertainty-aware prediction for tightened or chance constraints, (ii) terminal conditions (invariant/robust sets and backups) for recursive feasibility, and (iii) safety certificates (Lyapunov/barrier) to justify constraint satisfaction; see Hewing et al.~\cite{hewing2020learning} for a comprehensive review.
Koller et al.~\cite{koller2019learning} couple a \gls{GP} dynamics model with chance-constrained \gls{MPC} to enable high-probability safe exploration while improving the policy.
In summary, these safe model-based learning methods generally require a coarse dynamic model and an initial safe set with a backup controller, rely on \emph{high-probability} (not categorical) guarantees, face scalability limits due to \gls{GP} regressors, and the learned dynamics are often residual on top of a known structure. 
By contrast, we focus on safe model-free reinforcement learning, which bypasses these modeling requirements and directly learns task policies under safety filtering without relying on explicit dynamics models.

\section{Problem Formulation}
\label{sec:theory}
We study an RL agent operating in a stochastic environment subject to failure conditions that must never occur.
We formalize this setting in two complementary ways: (i) as a \gls{SC-MDP}, where only actions that keep the agent safe are allowed, and (ii) as a filtered \gls{MDP}, where a safety filter minimally intervenes to correct unsafe actions. 
These two perspectives lay the groundwork for the theoretical result in the next section: an optimal policy learned in the filtered \gls{MDP}, and subsequently deployed with the same filter, achieves the same asymptotic performance as the best policy in the \gls{SC-MDP}.
The assumptions we adopt in this section to formulate the safety-critical decision-making problem and prove our theoretical results are introduced in \autoref{ass:standing_formal}.

\subsection{Safety-Critical Markov Decision Process}
\label{subsec:problem_formulation}
We consider an \gls{MDP} defined by tuple $\mathcal{M}=(\mathcal{S}, \mathcal{A}, \mathbb{P}, r, \gamma)$, where $\mathcal{S}$ is the state space, $\mathcal{A}$ is the action space, $\mathbb{P}:\mathcal{S}\times\mathcal{A}\rightarrow \mathcal{P}(\mathcal{S})$ is the transition probability, $r:\mathcal{S}\times\mathcal{A}\rightarrow\mathbb{R}$ is the reward function, and $\gamma\in(0,1)$ is the discount factor. 
We denote $\mathcal{F}\subset\mathcal{S}$ as the \emph{failure set} encoding
conditions deemed unacceptable and must be strictly avoided at all times, formally defined as:
\begin{equation}
    \mathcal{F}\coloneqq\bigl\{s\in\mathcal{S}:\ g(s)<0\bigr\},
\end{equation}
where $g$ is a safety \textit{margin} function representing $\failure$ as its zero sublevel set.

\begin{definition}[Safety-Critical \gls{MDP} (\gls{SC-MDP})]
\label{def:SC-MDP}
An \emph{\gls{SC-MDP}} is a tuple $\mathcal{M}_\mathrm{SC}=(\mathcal{S},\mathcal{A},\mathbb{P},r,\gamma, \mathcal{F})$.
For each $s\in\mathcal{S}$, $\mathcal{M}_\mathrm{SC}$ defines an infinite-horizon constrained optimal control problem:
\begin{subequations}
\label{eq:scmdp-opt}
\begin{align}
\max_{\pi}\quad & 
\mathbb{E}\!\left[\sum_{t=0}^{\infty}\gamma^{t} r(s_t,a_t)\ \middle|\ s_0=s,\ a_t\sim\pi(\cdot\mid s_t),\ s_{t+1}\sim\mathbb{P}(\cdot\mid s_t,a_t)\right] \\
\st\quad &
\Pr\!\left[s_t\notin\mathcal{F},\,\forall t\ \middle|\ s_0=s\right]=1, \label{eq:all_time_safe}
\end{align}
\end{subequations}
where the task policy $\pi$ is a stochastic kernel on $\mathcal{A}$ given $\mathcal{S}$.
\end{definition}

We now characterize what constitutes an admissible policy,
\ie, one that satisfies the all-time safety constraint~\eqref{eq:all_time_safe},
by adopting the notion of set invariance~\cite{blanchini1999set}. 
Specifically, a \emph{controlled-invariant} set $\Omega\subset\mathcal{S}$
guarantees the existence of a policy $\pi$ that recursively keeps the state $s$ within $\Omega$ for all time, \ie, $\forall s\in\Omega$ and $\forall a\in\support\pi(\cdot\mid s)$, $\support\mathbb{P}(\cdot\mid s, a)\subseteq\Omega$. 
Denoting $\Omega^*\subseteq\mathcal{S}\setminus\mathcal{F}$ as the \emph{maximal} controlled-invariant safe set, for each $s\in\Omega^*$, we define the \textit{safe action set} as
\begin{equation}
\mathcal{A}_{\mathrm{safe}}(s)\coloneqq\{\,a\in\mathcal{A}:\ \support\mathbb{P}(\cdot\mid s,a)\subseteq\Omega^*\,\}.
\end{equation}
Now, we define the \emph{admissible policy set} as
\begin{equation}
\Pi_{\mathrm{safe}}
\coloneqq
\Bigl\{\pi:\ \support\pi(\cdot\mid s)\subseteq \mathcal A_{\mathrm{safe}}(s),\, \forall s\in\Omega^* \Bigr\}.
\end{equation}
The admissible policy set $\Pi_\mathrm{safe}$ is the largest set of policies that ensure all-time safety~\eqref{eq:all_time_safe}
provided that the system is initialized within $\Omega^*$ 
(see \autoref{prop:maximal-admissible} for the proof).
For an admissible policy $\pi\in\Pi_{\mathrm{safe}}$, we define its \textit{value} on $\mathcal{M}_\mathrm{SC}$ as
\begin{equation}
V^{\pi}_{\mathcal M_{\mathrm{SC}}}(s)\coloneqq\mathbb E\!\left[\sum_{t=0}^{\infty}\gamma^{t}r(s_t,a_t)\ \middle|\ s_0=s,\ a_t\sim\pi(\cdot\mid s_t),\ s_{t+1}\sim\mathbb P(\cdot\mid s_t,a_t)\right],
\end{equation}
and the optimal value as $V^{*}_{\mathcal M_{\mathrm{SC}}}(s):=\sup_{\pi\in\Pi_{\mathrm{safe}}}V^{\pi}_{\mathcal M_{\mathrm{SC}}}(s)$.

\begin{remark}[Probabilistic and worst-case uncertainty]
\label{rem:SC-MDP}
The \gls{SC-MDP} unifies performance-centric \emph{probabilistic} uncertainty
and safety-centric \emph{deterministic} uncertainty
within a single framework. In contrast to conventional viewpoints that treat probabilistic and deterministic uncertainty as mutually exclusive, we show that combining them is not only theoretically sound but also advantageous: it achieves the best of both worlds---optimal expected task performance under strict safety guarantees.
\end{remark}

Real systems are often designed around \emph{unknown-but-bounded} uncertainty (\eg, external disturbances and modeling errors), while facing hazards captured here by the failure set $\mathcal{F}$. Many ``safe \gls{RL}'' benchmarks and baselines adopt the \gls{CMDP} formalism~\cite{ji2023safety,OpenAI2019SafetyGym}, in which the agent maximizes expected return \emph{subject to an expected cumulative constraint budget}. In that view, transient violations are permitted as long as their expected cumulative constraint cost remains below the threshold. As a result, \emph{\gls{CMDP} formulations can still permit rare but catastrophic failures} on individual trajectories, which are unacceptable in safety-critical decision-making. This also conflicts with the spirit of the \gls{ODD}, which specifies conditions under which a system should \emph{never} experience a specified class of failures~\cite{hsu2024safety}; \gls{CMDP} formulations generally do not provide per-trajectory, failure-free guarantees under such conditions.

By contrast, the \gls{SC-MDP} requires \emph{categorical} avoidance of failure, yielding a zero-violation specification by construction. It is robust to tail events while still leaving task performance optimized in expectation. This allows engineers to delineate a clear operating envelope within which the system is \emph{guaranteed} to operate safely—supporting deployment, auditability, and public trust in automated decision-making.

\subsection{Filtered Markov Decision Process}
\label{subsec:safety_filters}
To ensure that \gls{SC-MDP} satisfies all-time safety constraint~\eqref{eq:all_time_safe}, we leverage a \emph{safety filter}—an automatic process that monitors the agent's decision-making and, when necessary, 
modifies the nominal action in order to prevent safety violations. However, constraining the agent to an overly restrictive invariant set may hinder the agent from achieving satisfactory task performance, \ie, to accumulate high discounted rewards. Given the \gls{MDP} $\mathcal{M}$ together with the maximal controlled-invariant safe set $\Omega^*$, we formalize a \emph{perfect (least-restrictive) safety filter}~\cite{hsu2024safety} that enforces safety by intervening if and only if the nominal action causes state $\state$ to immediately exit $\Omega^*$.
\begin{definition}[Perfect safety filter]
\label{def:perfect_sf}
A \emph{perfect (least-restrictive) safety filter} is a map $\phi:\mathcal{S}\times\mathcal{A}\to\mathcal{A}$ such that, $\forall s\in\Omega^*$,
\begin{enumerate}
\item $\phi(s,a)\in\mathcal{A}_{\mathrm{safe}}(s)$, $\forall a\in\mathcal{A}$,
\item $\phi(s,a)=a$, $\forall a\in\mathcal{A}_{\mathrm{safe}}(s)$.
\end{enumerate}
\end{definition}

For safe \gls{RL}, we now define the \emph{filtered MDP}, where every action is passed through a perfect safety filter before execution. This can be interpreted as a ``bubble-wrapped'' environment, where the agent remains agnostic to the safety filter.

\begin{definition}[Filtered \gls{MDP}]
\label{def:filtered_mdp}
The filtered \gls{MDP} is a tuple $\mathcal{M}_{\phi}:=(\Omega^*,\mathcal{A},\mathbb{P}_{\phi},r_{\phi},\gamma)$, where
$\phi$ is a perfect safety filter, $\mathbb{P}_{\phi}(\cdot\!\mid s,a):=\mathbb{P}\bigl(\cdot\!\mid s,\phi(s,a)\bigr)$ is the filtered transition probability, and $r_{\phi}(s,a):=r\bigl(s,\phi(s,a)\bigr)$ is the filtered reward.
For a measurable and stationary policy $\pi$, we define its value on $\mathcal{M}_\phi$ as
\begin{equation}
    V^{\pi}_{\mathcal{M}_\phi}(s)
    :=\mathbb E\!\left[\sum_{t=0}^{\infty}\gamma^{t}\,r_{\phi}(s_t,a_t)\ \middle|\ s_0=s,\
    a_t\sim\pi(\cdot\mid s_t),\ s_{t+1}\sim\mathbb P_{\phi}(\cdot\mid s_t,a_t)\right],
\end{equation}
and the optimal filtered value as $V^{*}_{\mathcal{M}_\phi}(s):=\sup_{\pi}V^{\pi}_{\mathcal{M}_\phi}(s)$.
\end{definition}

\section{Safe Reinforcement Learning Theory}
In this section, we present our core theoretical results. First, we show that learning in the filtered \gls{MDP} $\mathcal{M}_\phi$ is provably safe, and that the standard RL convergence guarantees remain intact. Second, we prove that any policy that is optimal in $\mathcal{M}_\phi$, when executed under the same filter at deployment, is also optimal in the \gls{SC-MDP} $\mathcal{M}_\mathrm{SC}$.
To establish these results rigorously, we begin by stating the technical assumptions required for our analysis.

\begin{assumption}[Standing assumptions]
\label{ass:standing_formal}
We assume the following throughout this section:
\begin{enumerate}
\item (\emph{Spaces and measurability}) $\mathcal{S}$ and $\mathcal{A}$ are Borel spaces. $\Omega^*$ is nonempty and closed. $\mathbb{P}(\cdot\mid s,a)$ and $r(s,a)$ are Borel-measurable in their arguments.
\item (\emph{Safe initialization}) All training episodes start in $\Omega^*$.
\item (\emph{Stationarity and boundedness}) $\mathbb{P}(\cdot\mid s,a)$ and $r(s,a)$ are time-invariant. Rewards are bounded, \ie, $\sup_{s,a}|r(s,a)|<\infty$.
\item (\emph{Safety filter existence}) A perfect safety filter $\phi$ (\autoref{def:perfect_sf}) exists; $\phi$ is measurable and time-invariant.

\end{enumerate}
\end{assumption}

We are now ready to state our main theoretical result that formalizes the safety--performance separation principle in safe reinforcement learning.

\begin{theorem}[Safe and optimal \gls{RL} under a perfect safety filter]
\label{thm:safe_and_optimal_RL}
If~\autoref{ass:standing_formal} holds, then the following claims on the safety, convergence, and optimality of \gls{RL} under safety filtering are true:
\begin{enumerate}
\item \textbf{Safe learning.}
For any sequence of task policies produced by any RL algorithm during training, the filtered trajectories remain in $\Omega^*$ for all time.
\item \textbf{Convergence.}
Let $\textsc{Alg}$ be an \gls{RL} algorithm that converges on stationary discounted \glspl{MDP} with bounded rewards. Then, for any such \gls{MDP} $\mathcal{M}$, $\textsc{Alg}$ also converges on the corresponding filtered \gls{MDP} $\mathcal{M}_\phi$.
\item \textbf{Optimality under safety filtering.}
Let $\pi^\varepsilon_\phi$ denote a measurable and stationary $\varepsilon$-optimal policy on $\mathcal M_\phi$, possibly returned by $\textsc{Alg}$, for some $\varepsilon>0$:
\begin{equation}
    V^{\pi^\varepsilon_\phi}_{\mathcal{M}_\phi}(s)\ge V^*_{\mathcal{M}_\phi}(s)-\varepsilon,\qquad \forall s\in\Omega^*.
\end{equation}
We define the \emph{executed policy} $\pi_\mathrm{exec}$ as the pushforward of $\pi^\varepsilon_\phi$ by the map $a\mapsto\phi(s, a)$; \ie, for all Borel measurable sets $B\subseteq\mathcal{A}$,
\begin{equation}
\label{eq:executed_policy}
    \pi_\mathrm{exec}(B\mid s)\coloneqq\pi^\varepsilon_\phi\bigl(\{a\in\mathcal{A}:\ \phi(s, a)\in B\}\mid s\bigr),\qquad \forall s\in\Omega^*.
\end{equation}
Then, $\pi_\mathrm{exec}$ is a safe $\varepsilon$-optimal policy on $\mathcal{M}_\mathrm{SC}$:
\begin{equation}
    V^{\pi_\mathrm{exec}}_{\mathcal{M}_{\mathrm{SC}}}(s)\ \ge\ V^*_{\mathcal{M}_{\mathrm{SC}}}(s)-\varepsilon,\qquad \forall s\in\Omega^*.
\end{equation}
\end{enumerate}
\end{theorem}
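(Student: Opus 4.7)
The three claims can be handled essentially independently, with all the conceptual weight concentrated in Claim~3. My plan is to prove Claim~1 by a one-step invariance induction, Claim~2 by showing $\mathcal{M}_\phi$ is itself a stationary, bounded-reward, discounted MDP so that the hypothesis on \textsc{Alg} transfers, and Claim~3 by a two-way sandwich: (i) every admissible policy on $\mathcal{M}_\mathrm{SC}$ can be ``lifted'' to a policy on $\mathcal{M}_\phi$ with identical value, giving $V^*_{\mathcal{M}_\phi}\ge V^*_{\mathcal{M}_\mathrm{SC}}$, and (ii) the executed pushforward $\pi_\mathrm{exec}$ couples exactly with $\pi^\varepsilon_\phi$, giving $V^{\pi_\mathrm{exec}}_{\mathcal{M}_\mathrm{SC}}=V^{\pi^\varepsilon_\phi}_{\mathcal{M}_\phi}$.

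\textbf{Claim 1 (safe learning).} I proceed by induction on $t$. The base case $s_0\in\Omega^*$ is the safe-initialization assumption. For the inductive step, suppose $s_t\in\Omega^*$ and let $a_t$ be any action in $\support\pi_t(\cdot\mid s_t)$ for the training policy $\pi_t$. By property~1 of \autoref{def:perfect_sf}, $\phi(s_t,a_t)\in\mathcal{A}_\mathrm{safe}(s_t)$, and by the definition of $\mathcal{A}_\mathrm{safe}(s_t)$, $\support\mathbb{P}(\cdot\mid s_t,\phi(s_t,a_t))\subseteq\Omega^*$. Hence $s_{t+1}\in\Omega^*$ almost surely. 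Because the argument nowhere uses the specific form of $\pi_t$, it holds for any training-time sequence produced by any RL algorithm.

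\textbf{Claim 2 (convergence).} I check that $\mathcal{M}_\phi=(\Omega^*,\mathcal{A},\mathbb{P}_\phi,r_\phi,\gamma)$ meets the hypotheses of \textsc{Alg}. $\Omega^*$ is a nonempty closed Borel subspace of $\mathcal{S}$, so it inherits a Borel structure; $\mathcal{A}$ is Borel by assumption. The kernel $\mathbb{P}_\phi(\cdot\mid s,a)=\mathbb{P}(\cdot\mid s,\phi(s,a))$ is a composition of measurable maps, hence Borel-measurable, and by Claim~1 it is well-defined as a stochastic kernel into $\Omega^*$. The reward $r_\phi(s,a)=r(s,\phi(s,a))$ is Borel-measurable and satisfies $\sup_{s,a}|r_\phi(s,a)|\le\sup_{s,a}|r(s,a)|<\infty$. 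Stationarity transfers from $(\mathbb{P},r,\phi)$ to $(\mathbb{P}_\phi,r_\phi)$. Therefore $\mathcal{M}_\phi$ is a stationary discounted MDP with bounded rewards, and \textsc{Alg} converges on it by assumption.

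\textbf{Claim 3 (optimality), the main obstacle.} I first verify safety of $\pi_\mathrm{exec}$: by \eqref{eq:executed_policy}, $\support\pi_\mathrm{exec}(\cdot\mid s)\subseteq\phi(s,\mathcal{A})\subseteq\mathcal{A}_\mathrm{safe}(s)$ by property~1 of the perfect filter, so $\pi_\mathrm{exec}\in\Pi_\mathrm{safe}$. Next, for any $\pi\in\Pi_\mathrm{safe}$ and any $s\in\Omega^*$, property~2 of \autoref{def:perfect_sf} gives $\phi(s,a)=a$ for every $a\in\support\pi(\cdot\mid s)\subseteq\mathcal{A}_\mathrm{safe}(s)$; consequently $\mathbb{P}_\phi(\cdot\mid s,a)=\mathbb{P}(\cdot\mid s,a)$ and $r_\phi(s,a)=r(s,a)$ $\pi(\cdot\mid s)$-almost surely, which yields $V^\pi_{\mathcal{M}_\phi}(s)=V^\pi_{\mathcal{M}_\mathrm{SC}}(s)$ and hence $V^*_{\mathcal{M}_\phi}(s)\ge V^*_{\mathcal{M}_\mathrm{SC}}(s)$. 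The crux is to show $V^{\pi_\mathrm{exec}}_{\mathcal{M}_\mathrm{SC}}(s)=V^{\pi^\varepsilon_\phi}_{\mathcal{M}_\phi}(s)$ via a coupling: construct a common probability space on which $(s_t,a_t)\sim\pi^\varepsilon_\phi$ rolls out in $\mathcal{M}_\phi$ and define $b_t:=\phi(s_t,a_t)$; by the pushforward definition of $\pi_\mathrm{exec}$, the process $(s_t,b_t)$ has exactly the law of a rollout of $\pi_\mathrm{exec}$ on $\mathcal{M}_\mathrm{SC}$ (since $b_t\in\mathcal{A}_\mathrm{safe}(s_t)$ so no additional filtering is implicit in $\mathcal{M}_\mathrm{SC}$), and the per-step rewards agree by $r(s_t,b_t)=r(s_t,\phi(s_t,a_t))=r_\phi(s_t,a_t)$. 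Taking discounted expectations and applying Fubini (which is licensed by boundedness of $r$) gives the equality of values. Chaining these facts,
\begin{equation}
V^{\pi_\mathrm{exec}}_{\mathcal{M}_\mathrm{SC}}(s)=V^{\pi^\varepsilon_\phi}_{\mathcal{M}_\phi}(s)\ \ge\ V^*_{\mathcal{M}_\phi}(s)-\varepsilon\ \ge\ V^*_{\mathcal{M}_\mathrm{SC}}(s)-\varepsilon,
\end{equation}
which is the desired bound. The delicate point I expect to spend most effort on is ensuring the pushforward in \eqref{eq:executed_policy} is a well-defined measurable stochastic kernel (so that $\pi_\mathrm{exec}$ is a legal policy) and that the coupling argument is rigorous under Borel measurability rather than appealing to finite- or countable-space intuition; for this I plan to invoke standard measurable selection and pushforward results for Borel spaces, leveraging that $\phi$ is Borel-measurable by \autoref{ass:standing_formal}.
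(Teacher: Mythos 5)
Your proposal is correct, and Claims 1 and 2 match the paper's proof essentially line for line (the paper additionally notes that the carry-over of convergence presumes the algorithm's side conditions, e.g.\ stepsize and coverage, also hold in $\mathcal{M}_\phi$ — a caveat worth keeping). For Claim 3, however, you take a genuinely different route on the key comparison of optimal values. The paper proves the full equality $V^*_{\mathcal{M}_\phi}=V^*_{\mathcal{M}_\mathrm{SC}}$ on $\Omega^*$ by showing that the optimal Bellman operators of the two MDPs coincide pointwise — since the image of $a\mapsto\phi(s,a)$ is exactly $\mathcal{A}_{\mathrm{safe}}(s)$ and $\phi$ is the identity there, the supremum over $\mathcal{A}$ post-composed with $\phi$ equals the supremum over $\mathcal{A}_{\mathrm{safe}}(s)$ — and then invoking uniqueness of the fixed point of a $\gamma$-contraction on bounded measurable functions. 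You instead prove only the one-sided inequality $V^*_{\mathcal{M}_\phi}\ge V^*_{\mathcal{M}_\mathrm{SC}}$ by lifting each $\pi\in\Pi_{\mathrm{safe}}$ into $\mathcal{M}_\phi$ with unchanged value (using that $\phi$ is the identity on $\support\pi(\cdot\mid s)$ along a trajectory that stays in $\Omega^*$ by invariance), which is all the final chain of inequalities actually needs. Your version is more elementary — no Bellman operators or Banach fixed-point argument — at the cost of not establishing the (true, and conceptually informative) equality of the two optimal value functions; the paper's operator identity also makes transparent \emph{why} the two problems are the same, not merely that one dominates the other. The remaining ingredients — the coupling $b_t:=\phi(s_t,a_t)$ giving $V^{\pi_{\mathrm{exec}}}_{\mathcal{M}_\mathrm{SC}}=V^{\pi^\varepsilon_\phi}_{\mathcal{M}_\phi}$, and the membership $\pi_{\mathrm{exec}}\in\Pi_{\mathrm{safe}}$ — are the same as the paper's unrolled-expectation argument, and your attention to the measurability of the pushforward kernel is a point the paper itself treats only implicitly.
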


\begin{proof}
The proof is deferred to \autoref{app:theory}.
\end{proof}

\begin{remark}[Safety with optimal performance]
\label{rem:safe_optimality}
\textbf{What \autoref{thm:safe_and_optimal_RL} establishes.}
For any $\varepsilon>0$, any $\varepsilon$-optimal policy on the filtered \gls{MDP} $\mathcal M_\phi$ induces, when executed through a perfect safety filter, a \emph{safe $\varepsilon$-optimal} policy on the SC-MDP $\mathcal M_{\mathrm{SC}}$. Consequently, as an \gls{RL} algorithm drives suboptimality on $\mathcal M_\phi$ to zero (e.g., by sufficient exploration of state–action pairs and appropriate stepsize conditions), the \emph{executed policy becomes asymptotically optimal} on $\mathcal M_{\mathrm{SC}}$:
\[
\lim_{\varepsilon \to 0^+} \bigl(V^*_{\mathcal M_{\mathrm{SC}}}(s)
- V^{\pi_{\mathrm{exec}}}_{\mathcal M_{\mathrm{SC}}}(s)\bigr)=0,\quad \forall s\in\Omega^*.
\]
Therefore, the \emph{safety–performance separation in \gls{RL} is complete}: enforcing safety with a sufficiently permissive safety filter during task policy training does not degrade asymptotic performance.
\end{remark}

\begin{remark}[Practical recipe for Safe \gls{RL}]
\label{rem:practical_recipe}
\textbf{Equivalence.}
Consider two ways to obtain a safe policy:

\begin{itemize}
  \item \emph{Oracle safe RL benchmark:} In the \gls{SC-MDP} ($\mathcal{M}_\mathrm{SC}$), choose any policy that is optimal among those that never violate safety (\ie, those in $\Pi_\mathrm{safe}$).
  \item \emph{Filter-based safe RL:} Learn in a world where every \emph{proposed} action is passed through the safety filter ($\mathcal{M}_\phi$), and pick any policy that is optimal there. At deployment in the \gls{SC-MDP}, run the learned policy through the same filter.
\end{itemize}
The two routines achieve the same long-run return on safe trajectories in the limit as the training suboptimality on $\mathcal{M}_\phi$ vanishes (\autoref{rem:safe_optimality}).

\textbf{Practical Recipe.} Unlike the oracle safe \gls{RL}, which cannot be realized by any practical algorithm, filter-based safe RL offers an actionable framework as depicted in \autoref{fig:control_diagram}:
first synthesize a (task-agnostic) safety filter for the \gls{SC-MDP}, then
train your task policy with any standard RL algorithm while keeping the safety filter in the loop, and deploy the same policy–filter pair at runtime.
The safety filter needs to be computed only once
for any number of task policies, 
provided that the transition probability $\mathbb{P}$ and the failure set $\failure$ stay the same.
Crucially, under the perfect (least-restrictive) safety filter assumption, this approach yields formal safety guarantees during both training and deployment with asymptotic task performance \emph{identical} to that obtained by training directly under hard safety constraints. In practice, approximate yet sufficiently permissive filters can still dramatically reduce (or eliminate) safety violations during training while matching or even surpassing the performance of baseline constrained \gls{RL} methods (see \autoref{subsec:experiment_setup}).
\end{remark}

\section{Experiments}
\label{sec:experiments}

To empirically validate our theoretical results, we build on the Safety Gymnasium benchmark~\cite{ji2023safety}, a modern successor to OpenAI’s Safety Gym~\cite{OpenAI2019SafetyGym}.
Safety Gymnasium provides a unified reinforcement learning environment for evaluating different training algorithms, emphasizing safety-critical tasks with standardized metrics.
Importantly, it retains the original design philosophy of Safety Gym: testing whether agents can train an optimal task policy without sacrificing safety during training, with enhanced environment and task diversity, simulation fidelity, and compatibility with modern \gls{RL} training pipelines such as Stable-Baselines3~\cite{raffin2021stable}.
By situating our experiments in this benchmark, we ensure that our evaluation (i) directly validates the safety--performance separation as stated in \autoref{thm:safe_and_optimal_RL}, (ii) is grounded in a standardized and widely used testbed for constrained \gls{RL}, and (iii) provides consistent results across diverse tasks and environments.
We expect that our safe \gls{RL} framework will serve as a state-of-the-art baseline for future work on safe reinforcement learning; because it is directly deployable within the Safety Gymnasium benchmark, it can be used out of the box to conveniently evaluate and compare new methods.

\subsection{Experiment Setup}
\label{subsec:experiment_setup}
In this section, we introduce the setup of our experiments, including the agent, environment, safety filters used for training and deployment, and baseline comparisons.
Full details of the experimental setup, including model architecture, safety specifications, and filter implementation, can be found in \autoref{app:implementation}.

\begin{figure}[t]
    \centering
    \includegraphics[trim={0cm 2cm 0cm 2cm},clip, width=0.8\textwidth]{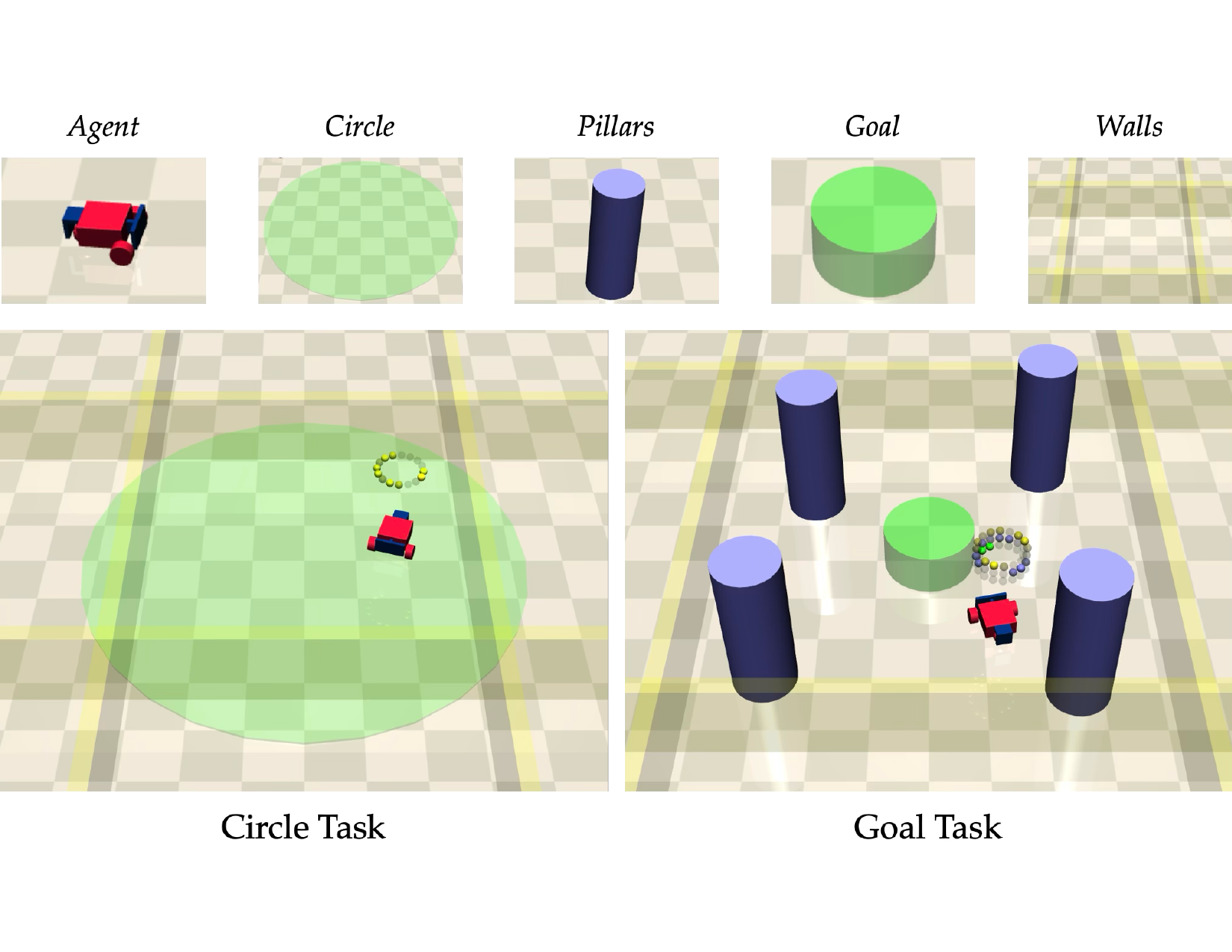}
    \caption{\textit{Top}: environment components (Agent, Circle, Pillars, Goal, Walls). \textit{Bottom}: example scenes for the two tasks from Safety Gymnasium~\cite{ji2023safety}. We use the \emph{Car} agent across all experiments. \emph{Circle} task: the agent aims to track the green circle's boundary as fast as possible while remaining inside the square wall; safety is violated by crossing the wall. \emph{Goal} task: the agent navigates to a fixed green goal while avoiding the pillars and staying inside the wall; safety is violated by crossing the wall or colliding with a pillar. In both tasks, the agent is randomly initialized inside the wall and away from the pillars and the goal.}
    \label{fig:environment}
\end{figure}

\textbf{Agent.} We use the \emph{Car} agent from Safety Gymnasium. It has continuous control inputs and is implemented as a differential-drive platform with two independently driven parallel wheels and a free-rolling rear wheel, exactly as provided in the benchmark suite~\cite{OpenAI2019SafetyGym,ji2023safety}.

\textbf{Tasks, rewards, and safety constraints.} 
We evaluate safe \gls{RL} with the following representative tasks from Safety Gymnasium~\cite{ji2023safety}, each with distinct reward functions and safety constraints.
\begin{itemize}
    \item \textbf{Goal:}
    The robot navigates to a fixed goal position while avoiding contact with cylindrical obstacles and remaining inside a square wall (see \autoref{fig:environment}). Specifically, the reward is proportional to the decrement of the distance between the robot and the goal over a single timestep, and an additional reward is provided when the robot reaches the goal. The robot is constrained within a square wall that contains four cylindrical obstacles (\ie, \emph{pillars}); the position of the square wall and the pillars remain fixed. Crossing the square wall or colliding with a pillar is considered a safety violation. The robot is randomly initialized inside the square wall, sufficiently far from the pillars and the goal.
    
    \item \textbf{Circle:}
    The robot aims to track the boundary of the green circle as fast as possible while remaining inside a square wall (see \autoref{fig:environment}). Specifically, the reward function consists of two factors: (i) a tracking reward that increases as the robot stays closer to the circle boundary, and (ii) a velocity factor that rewards larger tangential velocity along the circle. The robot is constrained within a square wall with a fixed position; crossing the wall is considered a safety violation. The robot is randomly initialized inside the square wall.
\end{itemize}

\textbf{Safety filters.}
We use two types of safety filters in our experiments: one with value-based monitoring~\cite[Section~3.1]{hsu2024safety} and the other with rollout-based monitoring~\cite[Section~3.3]{hsu2024safety}. For both filters, we learn a best-effort fallback policy and the associated safety value function with model-free, RL-based \gls{HJ} reachability analysis~\cite{fisac2019bridging, oh2025safety}. Specifically:
\begin{itemize}
    \item \textbf{Value-based filter}
    queries, at each timestep, the learned value function to determine whether the current state is safe. If the proposed action sampled from the task policy is deemed safe, we execute it; otherwise, we override it with the best-effort safety fallback.
    
    \item \textbf{Rollout-based filter} simulates, at each timestep, a state trajectory based on a proposed action sampled from the task policy, followed by a series of actions from the best-effort fallback policy for a fixed horizon.
    If the state reaches a known terminal safe state (or controlled-invariant safe set) within that horizon, we may execute an action sampled from the task policy; otherwise, we apply the safety fallback action.
\end{itemize}

While a valid (resolution-complete) safety filter with value-based monitoring can be obtained by solving the safety Bellman equation with \gls{DP}~\cite{bansal2017hamilton}, this method does not scale to more than 6 continuous state dimensions.
Therefore, we adopt RL-based reachability analysis to obtain \emph{neural approximations} of a perfect safety filter for our experiments, which is, to the best of our knowledge, the only purely \emph{model-free} method that \emph{scalably} approximates the \emph{maximal} controlled-invariant safe set and a fallback policy using a simulator.
This makes RL-based reachability analysis the best-suited safety filter synthesis method for Safety Gymnasium benchmark, which provides more than 40 continuous observation dimensions and no information regarding the system dynamics.
In \autoref{subsec:results}, we show empirically that despite the lack of formal guarantees, a value-based neural safety filter can significantly reduce the frequency of safety violations during training.
Furthermore, the rollout-based monitoring, unlike the value-based counterpart, leads to a provably \emph{valid safety filter} for the benchmarking tasks in Safety Gymnasium, since coming to a stop is a known terminal safe state for the \emph{Car} agent.
Indeed, as shown in \autoref{subsec:results}, this approach achieves \textit{zero safety violations}.
Moreover, the task policies trained under both filters converge to the same—and, in some cases, higher—episodic return when compared to the baselines.

\textbf{Baselines.}
We compare against two baselines for constrained \gls{RL}:
\begin{itemize}
    \item \textbf{\gls{CPO}~\cite{achiam2017constrained}:} The agent optimizes its return subject to a budget on the environment’s constraint costs. We use the same implementation provided by Safety Gymnasium exactly as released.
    \item \textbf{Standard \gls{SAC}:}
    The agent optimizes its return with no explicit constraints or costs other than episode termination, which occurs immediately after a safety violation. Because episode termination prevents the agent from accruing positive rewards in the future, this setup implicitly encourages the agent to learn policies that satisfy the safety constraints. This is a commonly used heuristic for \gls{RL} under soft safety constraints~\cite{oh2025safety, faust2018prm}.
\end{itemize}

\textbf{\gls{RL} algorithms.}
All task policies are trained with \gls{SAC}~\cite{haarnoja2018soft} (except the \gls{CPO} baseline) using the Stable-Baselines3~\cite{raffin2021stable} implementation. Training budgets (environment steps per run), model capacity, and evaluation protocols are kept identical across all methods to ensure a fair comparison.

\textbf{Reproducibility.}
Each method is run with \emph{five} independent random seeds. We report the mean and standard error across seeds under a fixed training step budget and a shared evaluation protocol (periodic evaluation episodes with safety violations and episodic returns logged).
All code and experimental scripts necessary to reproduce our results will be publicly released upon publication.

\subsection{Results}
\label{subsec:results}

\begin{figure}[t]
    \centering
    \includegraphics[width=\textwidth, trim=0 0 0 10, clip]{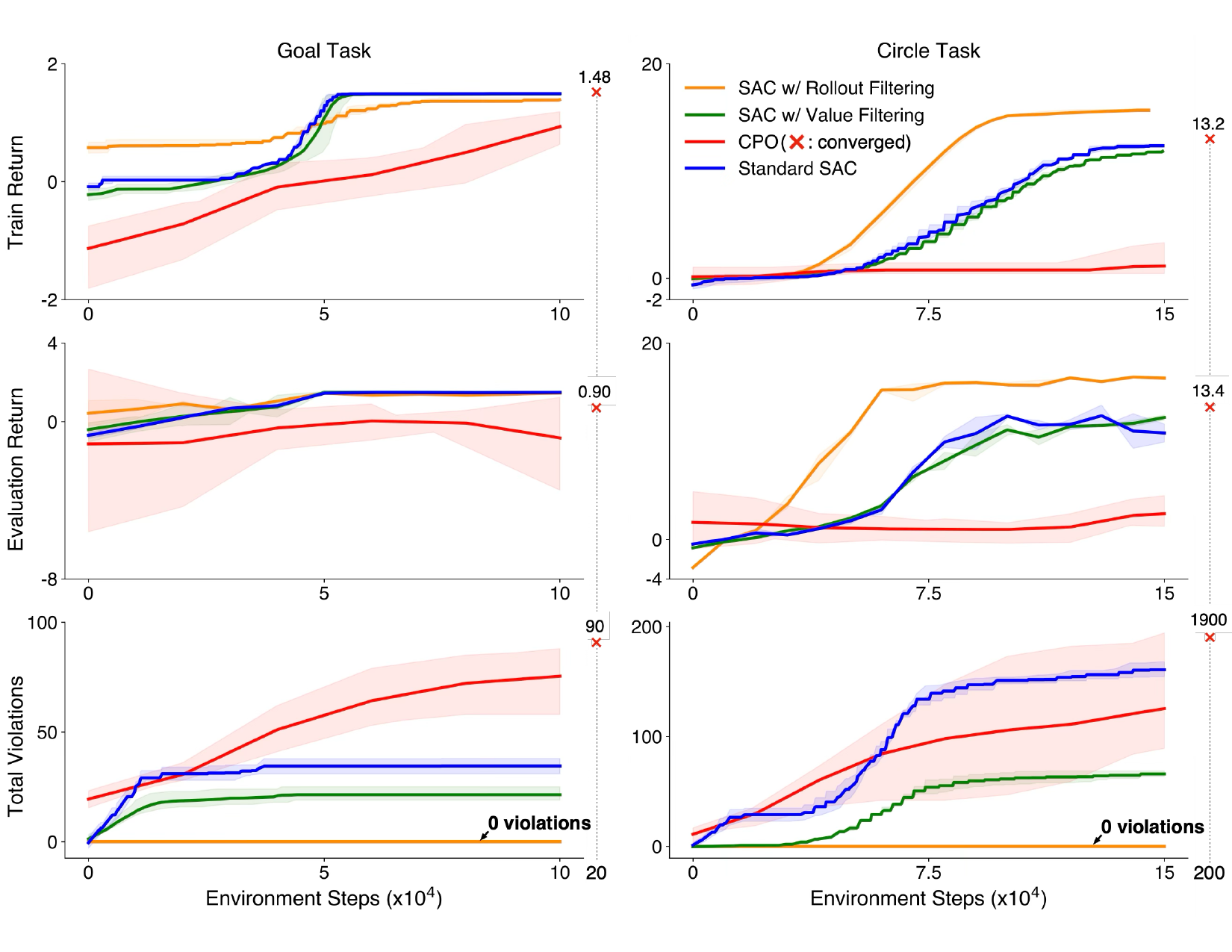}
    \caption{Experimental results on the \emph{Goal} (left) and \emph{Circle} (right) tasks in Safety Gymnasium. We compare our safe RL framework, implemented with an \gls{SAC}-based task policy and a neural safety filter with rollout- and value-based monitoring, against \gls{CPO} and standard \gls{SAC} baselines. \textit{Top}: mean episodic training return versus the number of environment steps. \textit{Middle}: mean episodic evaluation return versus the number of environment steps. \textit{Bottom}: cumulative safety violations versus the number of environment steps.
    Our method achieves \textbf{zero safety violations} when using a valid (rollout-based) safety filter while \textbf{converging to the same or higher return than the baselines}.
    Even with an error-prone approximate (value-based) filter, the number of safety violations is significantly reduced compared to the baselines.}
    \label{fig:experiment_results}
\end{figure}

In this section, we report results in training task policies under a value-based filter and a rollout-based filter, and compare them to the baselines.
Specifically, we focus on (i) safety violations during task policy training, (ii) convergence of the \gls{RL} algorithm, and (iii) optimality of the trained task policy.

\textbf{Safety during RL training.}
The number of safety violations accumulated up to a given environment step during task policy training is shown in the last row of \autoref{fig:experiment_results}.
Remarkably, our \emph{rollout-based filter recorded zero safety violations} in both tasks across five random seeds.
This is expected since the rollout-based filter is a \emph{valid safety filter} with strict guarantees, and it directly supports our theoretical claim of \emph{safe learning} in \autoref{thm:safe_and_optimal_RL}: given a valid safety filter, system trajectories remain safe throughout task policy training.
Although our value-based filter produced a non-zero violation count due to neural approximation error, it nevertheless resulted in fewer violations on both tasks compared to the two baselines.
Standard \gls{SAC} incurred the second-most violations on the \emph{Goal} task and the most on the \emph{Circle} task.
We also observe that the cumulative violation curves plateau with training, indicating that although standard \gls{SAC} cannot enforce safety during training, the learned task policy increasingly avoids violations as the algorithm converges.
\gls{CPO} recorded the most total violations on \emph{Goal} and the second-most on \emph{Circle}. For completeness, we additionally mark \gls{CPO}’s post-convergence metric (symbol ``\textcolor{red}{×}'' in \autoref{fig:experiment_results}), since \gls{CPO} converges much later than the other methods. Our primary comparisons, however, use counts at a common training budget—$100$k environment steps for \emph{Goal} and $150$k for \emph{Circle}—for all methods.

\textbf{Convergence of RL training under safety filtering.}
From \autoref{fig:experiment_results}, we conclude that \gls{SAC} under both rollout- and value-based filtering, as well as standard \gls{SAC}, converged well within our training budget.
For these three methods, the episodic training return and the evaluation return plateau within the budget on both tasks, and the variance of these metrics remains very small across seeds, indicating convergence over repeated runs.
This supports our theoretical claim of \emph{convergence} in \autoref{thm:safe_and_optimal_RL}: if an \gls{RL} algorithm converges on an (unconstrained) \gls{MDP}, then the same algorithm converges on its filtered counterpart.
By contrast, \gls{CPO} takes significantly more environment steps to converge, and its episodic returns exhibit substantially larger variance than the other filtered \gls{RL} methods.

In the \emph{Circle} task, we observe that training under the \emph{rollout‐based} filter converges substantially faster (in environment steps) than the other methods.
This acceleration is consistent with prior reports that safety filtering improves sample efficiency by pruning unsafe exploration~\cite{wabersich2021predictive,bastani2021safe,bejarano2024safety}.
A plausible explanation is task geometry and reward shaping: \emph{Circle} rewards encourage high tangential speed while staying close to the circle boundary, which lies inside the square wall.
Pushing for higher reward inevitably increases the chance of wall contact under unconstrained exploration; the rollout-based filter preemptively removes those unsafe proposals, concentrating data on productive (safe) trajectories and speeding convergence.
By contrast, the effect is less pronounced in the \emph{Goal} task.
Depending on the random initialization, a shortest path to the goal can often avoid obstacles and the wall even without interventions, so there is less unsafe exploration to prune. 
Consequently, while the safety filter still prevents violations, the gain in exploration efficiency—and thus in convergence rate—is smaller on \emph{Goal} than on \emph{Circle}.
Characterizing and optimizing the convergence rate for reinforcement learning under safety filtering is beyond the scope of this paper and remains an important direction for future work.

\textbf{Performance of the task policy.}
As shown in \autoref{fig:experiment_results}, on both tasks the policies trained under rollout‐ and value‐based filtering attain final performance that matches—or exceeds—standard \gls{SAC} and \gls{CPO}. This empirically supports our most significant claim of \emph{optimality under safety filtering} (\autoref{thm:safe_and_optimal_RL}, \autoref{rem:safe_optimality}): enforcing safety with a sufficiently permissive safety filter during training does not degrade the asymptotic performance.
In the \emph{Goal} task, training under rollout‐ and value‐based filtering yields nearly identical episodic training and evaluation returns, while \gls{CPO} achieves a substantially lower return under the same training budget. 
As a sanity check, we run \gls{CPO} until convergence and observe that its return eventually matches that of the other methods.
In the \emph{Circle} task, training with the rollout‐based filter achieves significantly higher return than both the value‐based filter and standard \gls{SAC}; \gls{CPO} records little meaningful return within budget and, even post‐convergence, reaches a level comparable to value‐based filtering and standard \gls{SAC}, yet still below the policy trained with the rollout-based filter.
Taken together, these results demonstrate that task policy training under safety filtering attains the same (and, in some cases, higher) task performance as unconstrained baselines, providing concrete empirical evidence for the \emph{safety–performance separation} in safe \gls{RL}.

\section{Conclusion}
In this paper, we have established formal convergence and optimality results for reinforcement learning under safety filtering (safe \gls{RL}).
Our theoretical analysis proves that, under an idealized safety filter that is minimally restrictive yet capable of preventing all unsafe actions, reinforcement learning achieves complete safety--performance separation, yielding the same asymptotic performance as direct optimization under hard safety constraints.
This result resolves a longstanding misconception in reinforcement learning that enforcing safety inevitably limits the agent's attainable performance.
It shows that safety filtering provides a principled mechanism for maintaining both formal safety guarantees and optimal long-term behavior in RL, independent of the specific RL algorithm used for optimizing task performance.
Empirical validation on Safety Gymnasium benchmarks further supports our theory, demonstrating that, in practice, the proposed safe RL framework achieves zero safety violations with a valid safety filter, while converging to a policy that matches or surpasses baseline performance.
Taken together, these findings provide a rigorous paradigm for safe reinforcement learning: learn your safety filter once, train any RL algorithm with the filter in the loop, and deploy the same filter--policy pair to achieve strict safety and optimal performance in tandem.

\begin{ack}
This work has been partially supported by NSF CAREER Award 2340851.
\end{ack}

\printbibliography

\newpage
\appendix

\section{Theoretical Results and Proofs}
\label{app:theory}
\begin{lemma}[No all-time safety outside the maximal invariant set]
\label{lem:maximal-invariant}
Let $\Omega^*\subseteq \mathcal S\setminus\mathcal F$ be the maximal controlled-invariant safe set.
Then, for any stationary policy $\pi$ and any $s\in {\Omega^{*}}^c\cap(\mathcal S\setminus\mathcal F)$,
\[
\Pr_{\pi,\mathbb P}\!\big[s_t\notin\mathcal F,\,\forall t\ \big|\ s_0=s\big]\;<\;1.
\]
In words, starting outside $\Omega^*$, no stationary policy can satisfy the all-time safety constraint with probability $1$.
\end{lemma}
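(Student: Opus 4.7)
The plan is to prove the contrapositive: for every stationary policy $\pi$, the ``safety basin''
\[
\Omega_\pi \coloneqq \bigl\{s\in\mathcal{S}\ :\ \Pr_{\pi,\mathbb{P}}[s_t\notin\mathcal{F},\,\forall t\mid s_0=s]=1\bigr\}
\]
is itself controlled-invariant (with $\pi$ as its witness policy), hence by the maximality of $\Omega^*$ we have $\Omega_\pi\subseteq\Omega^*$. Taking complements within $\mathcal{S}\setminus\mathcal{F}$ then yields the desired strict inequality: any $s\in(\Omega^*)^c\cap(\mathcal{S}\setminus\mathcal{F})$ must lie outside $\Omega_\pi$, so the safety probability from $s$ is strictly less than $1$.

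\textbf{Key steps.} First, I would verify $\Omega_\pi\subseteq\mathcal{S}\setminus\mathcal{F}$, which is immediate because $s\in\mathcal{F}$ gives safety probability $0$. Second, I would establish the one-step invariance property using the Markov property and stationarity of $\pi$: conditioning on the first action and first transition,
\[
\Pr_\pi[\text{safe forever}\mid s_0=s] \;=\; \int_{\mathcal{A}}\pi(da\mid s)\int_{\mathcal{S}}\mathbb{P}(ds'\mid s,a)\,\Pr_\pi[\text{safe forever}\mid s_0=s'].
\]
If $s\in\Omega_\pi$, the left-hand side equals $1$, and since the integrand is bounded by $1$, it must equal $1$ for $\pi(\cdot\mid s)$-almost every $a$ and $\mathbb{P}(\cdot\mid s,a)$-almost every $s'$. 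Translating this into the support language used in the definition of controlled invariance: for every $a\in\support\pi(\cdot\mid s)$, every open neighborhood of $a$ transfers positive $\pi$-mass onto transitions landing in $\Omega_\pi$, and likewise for $s'\in\support\mathbb{P}(\cdot\mid s,a)$. Third, I would conclude that $\Omega_\pi$ (or, to avoid measurability subtleties, its closure) is controlled-invariant with witness $\pi$, and therefore $\Omega_\pi\subseteq\Omega^*$ by maximality. The contrapositive finishes the proof.

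\textbf{Main obstacle.} The technical friction lies in reconciling the two formulations of invariance: the paper's definition uses the \emph{topological support} inclusion $\support\mathbb{P}(\cdot\mid s,a)\subseteq\Omega$, whereas the Markov decomposition naturally yields only the measure-theoretic statement $\mathbb{P}(\Omega_\pi\mid s,a)=1$. These coincide when $\Omega_\pi$ is closed, which is not automatic. I would handle this by passing to $\overline{\Omega_\pi}$ and verifying that the support inclusion holds for the closure, using Assumption~\ref{ass:standing_formal}(1) (Borel-measurable $\mathbb{P}$ and closed $\Omega^*$) together with the fact that full measure on a Borel set forces the support to lie in its closure. A mild additional care is needed because $\Omega_\pi$ might also fail to be Borel in pathological cases; however, the event ``safe for all $t$'' is a countable intersection of Borel events, so $\Omega_\pi$ is a Borel set under the standing measurability assumptions, and the argument above goes through without further adjustment.
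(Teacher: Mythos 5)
Your proposal is correct and follows essentially the same route as the paper's proof: the paper likewise defines the set $R_\pi$ of states from which $\pi$ stays safe forever with probability one (your $\Omega_\pi$), argues via the Markov property that it is controlled-invariant under $\pi$, and invokes the maximality of $\Omega^*$, merely phrasing the argument as a contradiction rather than a contrapositive. The support-versus-almost-everywhere subtlety you flag is simply elided in the paper's proof, so your extra care there is a refinement of the same argument rather than a different approach.
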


\begin{proof}
We prove by contradiction. Assume there exists $s\in{\Omega^*}^c\cap(\mathcal{S}\setminus\mathcal{F})$ and a stationary policy $\pi$ such that $\Pr_{\pi, \mathbb{P}}[s_t\notin \mathcal{F},\,\forall t\mid s_0=s]=1$.
Define
\[
R_\pi\;:=\;\Big\{x\in \mathcal S\setminus\mathcal F:\ 
\Pr_{\pi,\mathbb P}\!\big[s_t\notin\mathcal F,\,\forall t\ \big|\ s_0=x\big]=1\Big\}.
\]
By the Markov property, if $x\in R_\pi$, then the set of next possible states rendered by $\pi$ and $\mathbb{P}$ should be a subset of $R_\pi$. More formally, for all $a\in\support\pi(\cdot\mid x)$, $\support\mathbb{P}(\cdot\mid x, a)\subseteq R_\pi$. Thus, $R_\pi$ is a controlled-invariant subset of $\mathcal{S}\setminus\mathcal{F}$ under $\pi$, and since there exists $s\in R_\pi\setminus\Omega^*$, the set $\Omega^*\cup R_\pi$ is a strictly larger controlled-invariant safe set than $\Omega^*$. This contradicts the maximality of $\Omega^*$. Therefore, no such $s$ exists.
\end{proof}

\begin{proposition}[Maximality of the admissible policy set]
\label{prop:maximal-admissible}
Let $\Omega^*\subseteq \mathcal S\setminus\mathcal F$ be the maximal controlled-invariant safe set, and define
$\mathcal A_{\mathrm{safe}}(s):=\{a\in\mathcal A:\support\mathbb P(\cdot\mid s,a)\subseteq \Omega^*\}$ for all $s\in\Omega^*$. 
Consider the set of measurable, stationary policies
\[
\Pi_{\mathrm{safe}}
\coloneqq
\Bigl\{\pi:\ \support\pi(\cdot\mid s)\subseteq \mathcal A_{\mathrm{safe}}(s),\, \forall s\in\Omega^* \Bigr\}.
\]
Then $\Pi_{\mathrm{safe}}$ is the largest collection of policies that satisfy the all-time safety constraint from every initial state in $\Omega^*$.
\end{proposition}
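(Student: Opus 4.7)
The plan is to prove the proposition by establishing two claims: (i) \emph{inclusion} — every policy in $\Pi_{\mathrm{safe}}$ satisfies the all-time safety constraint from any initial state in $\Omega^*$; and (ii) \emph{maximality} — any measurable stationary policy not in $\Pi_{\mathrm{safe}}$ violates the all-time safety constraint from some initial state in $\Omega^*$. Together, these show that $\Pi_{\mathrm{safe}}$ is exactly the collection of admissible policies, and hence the largest such collection.

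For (i), I would proceed by induction on the time step. Starting from any $s_0 \in \Omega^*$, the definition of $\Pi_{\mathrm{safe}}$ gives $\support \pi(\cdot \mid s_0) \subseteq \mathcal{A}_{\mathrm{safe}}(s_0)$, so every action drawn from $\pi(\cdot\mid s_0)$ satisfies $\support \mathbb{P}(\cdot\mid s_0,a) \subseteq \Omega^*$. Hence $s_1 \in \Omega^*$ almost surely, and since $\Omega^* \subseteq \mathcal{S}\setminus\mathcal{F}$, also $s_1 \notin \mathcal{F}$ almost surely. Iterating this one-step argument along the Markov chain induced by $\pi$ and $\mathbb{P}$ yields $s_t \in \Omega^*$ for all $t$ almost surely, which implies the all-time safety constraint~\eqref{eq:all_time_safe}.

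For (ii), I would argue by contrapositive. Suppose $\pi \notin \Pi_{\mathrm{safe}}$. Then there exists $s^\star \in \Omega^*$ such that $\support \pi(\cdot \mid s^\star) \not\subseteq \mathcal{A}_{\mathrm{safe}}(s^\star)$, i.e., there is a positive-$\pi(\cdot\mid s^\star)$-measure set of actions $A_\star$ each of which satisfies $\support\mathbb{P}(\cdot\mid s^\star,a) \not\subseteq \Omega^*$. Consequently, conditioned on $s_0 = s^\star$, the probability that $s_1$ lies in ${\Omega^*}^{\compl}$ is strictly positive. Decompose this exit event into (a) $s_1 \in \mathcal{F}$, which is an immediate failure, and (b) $s_1 \in {\Omega^*}^{\compl} \cap (\mathcal{S}\setminus\mathcal{F})$, in which case \autoref{lem:maximal-invariant} (applied to the stationary policy $\pi$) guarantees a strictly positive probability of eventually reaching $\mathcal{F}$ from $s_1$. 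By the law of total probability and the Markov property, both cases contribute to a strictly positive probability that some $s_t \in \mathcal{F}$ starting from $s^\star$, contradicting~\eqref{eq:all_time_safe}.

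The main obstacle is the measurability bookkeeping in step (ii): one must verify that the set $A_\star$ is measurable and carries positive $\pi(\cdot\mid s^\star)$-mass, and that the map sending a next-state $s_1$ to its failure probability under $\pi$ is Borel-measurable so that the law-of-total-probability integration is well-defined. These requirements are satisfied under the Borel spaces, measurable kernels, and stationarity stipulated in \autoref{ass:standing_formal}, so no machinery beyond \autoref{lem:maximal-invariant} and standard Markov-chain integration is needed to close the argument.
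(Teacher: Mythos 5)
Your proposal is correct and follows essentially the same route as the paper's proof: an inductive invariance argument for the inclusion direction, and a contrapositive argument invoking \autoref{lem:maximal-invariant} for maximality. Your explicit case split on whether the first exit lands in $\mathcal F$ or in ${\Omega^*}^{\mathsf c}\cap(\mathcal S\setminus\mathcal F)$, and your attention to the measurability of the unsafe-action set, merely spell out steps the paper compresses into a single sentence.
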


\begin{proof}
(\emph{$\Pi_{\mathrm{safe}}$ policies are admissible.})
Let $\pi\in\Pi_{\mathrm{safe}}$. For any $s\in\Omega^*$ and any $a\in\support\pi(\cdot\mid s)$, we have $\support\mathbb P(\cdot\mid s,a)\subseteq\Omega^*$. By induction, we get $\Pr_{\pi,\mathbb P}\!\big[s_t\notin\mathcal F,\,\forall t\ \big|\ s_0=s\big]=1$.

(\emph{Policies not in $\Pi_{\mathrm{safe}}$ are not admissible.})
Let $\pi\notin\Pi_{\mathrm{safe}}$. Then there exist $s\in\Omega^*$ and $a\in\support\pi(\cdot\mid s)$ such that $a\notin\mathcal A_{\mathrm{safe}}(s)$, so $\support\mathbb P(\cdot\mid s,a)\nsubseteq\Omega^*$. Hence, with positive probability, the next state rendered by $\pi$ and $\mathbb{P}$ lies in ${\Omega^*}^c$. By \autoref{lem:maximal-invariant}, the overall probability of ever entering $\mathcal F$ starting from $s$ under $\pi$ is positive. Thus $\pi$ violates the all-time safety constraint from $s$.

Maximality of $\Pi_{\mathrm{safe}}$ follows immediately: any stationary policy assigning positive mass to an action outside $\mathcal A_{\mathrm{safe}}(s)$ at some $s\in\Omega^*$ cannot satisfy the all-time safety constraint from that state.
\end{proof}

\begin{theorem*}[Restatement of \autoref{thm:safe_and_optimal_RL}]
\label{thm:safe_and_optimal_RL_appendix}
If~\autoref{ass:standing_formal} holds, then the following claims on the safety, convergence, and optimality of \gls{RL} under safety filtering are true:
\begin{enumerate}
\item \textbf{Safe learning.}
For any sequence of task policies produced by any RL algorithm during training, the filtered trajectories remain in $\Omega^*$ for all time.
\item \textbf{Convergence.}
Let $\textsc{Alg}$ be an \gls{RL} algorithm that converges on stationary discounted \glspl{MDP} with bounded rewards. Then, for any such \gls{MDP} $\mathcal{M}$, $\textsc{Alg}$ also converges on the corresponding filtered \gls{MDP} $\mathcal{M}_\phi$.
\item \textbf{Optimality under safety filtering.}
Let $\pi^\varepsilon_\phi$ denote a measurable and stationary $\varepsilon$-optimal policy on $\mathcal M_\phi$, possibly returned by $\textsc{Alg}$, for some $\varepsilon>0$:
\begin{equation*}
    V^{\pi^\varepsilon_\phi}_{\mathcal{M}_\phi}(s)\ge V^*_{\mathcal{M}_\phi}(s)-\varepsilon,\qquad \forall s\in\Omega^*.
\end{equation*}
We define the \emph{executed policy} $\pi_\mathrm{exec}$ as the pushforward of $\pi^\varepsilon_\phi$ by the map $a\mapsto\phi(s, a)$; \ie, for all Borel measurable sets $B\subseteq\mathcal{A}$,
\begin{equation*}
    \pi_\mathrm{exec}(B\mid s)\coloneqq\pi^\varepsilon_\phi\bigl(\{a\in\mathcal{A}:\ \phi(s, a)\in B\}\mid s\bigr),\qquad \forall s\in\Omega^*.
\end{equation*}
Then, $\pi_\mathrm{exec}$ is a safe $\varepsilon$-optimal policy on $\mathcal{M}_\mathrm{SC}$:
\begin{equation*}
    V^{\pi_\mathrm{exec}}_{\mathcal{M}_{\mathrm{SC}}}(s)\ \ge\ V^*_{\mathcal{M}_{\mathrm{SC}}}(s)-\varepsilon,\qquad \forall s\in\Omega^*.
\end{equation*}
\end{enumerate}
\end{theorem*}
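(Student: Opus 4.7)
I would prove the three claims in sequence, reusing the structural result \autoref{prop:maximal-admissible} and the properties of the perfect safety filter from \autoref{def:perfect_sf}. The unifying idea is that the filtered MDP $\mathcal{M}_\phi$ and the admissible subspace of $\mathcal{M}_{\mathrm{SC}}$ generate the \emph{same} family of state--action trajectory distributions once we identify a policy on $\mathcal{M}_\phi$ with its pushforward through $\phi$. Claim~1 is almost immediate: since $\phi$ always outputs an action in $\mathcal{A}_{\mathrm{safe}}(s)$, any one-step transition from $s\in\Omega^*$ satisfies $\support\mathbb{P}(\cdot\mid s,\phi(s,a))\subseteq\Omega^*$; combined with the safe-initialization part of \autoref{ass:standing_formal}, induction on $t$ shows that every filtered trajectory stays in $\Omega^*\subseteq\mathcal{S}\setminus\mathcal{F}$ regardless of what task policy the RL algorithm is currently using.

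\textbf{Claim 2 (convergence).} I would verify that $\mathcal{M}_\phi=(\Omega^*,\mathcal{A},\mathbb{P}_\phi,r_\phi,\gamma)$ satisfies every hypothesis of a stationary discounted MDP with bounded rewards required by $\textsc{Alg}$. Borel measurability of $\mathbb{P}_\phi$ and $r_\phi$ follows from composing the Borel-measurable $\phi$ with the Borel-measurable $\mathbb{P}$ and $r$ from \autoref{ass:standing_formal}. Time-invariance is inherited because $\phi$, $\mathbb{P}$ and $r$ are all time-invariant. Boundedness of $r_\phi$ follows from $\sup_{s,a}|r_\phi(s,a)|\le\sup_{s,a'}|r(s,a')|<\infty$. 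The discount factor is unchanged. Hence $\textsc{Alg}$'s convergence hypotheses transfer verbatim.

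\textbf{Claim 3 (optimality).} This is the substantive step and I would attack it in three substeps. (i) The executed policy $\pi_{\mathrm{exec}}$ is admissible: by the pushforward definition, $\support\pi_{\mathrm{exec}}(\cdot\mid s)\subseteq\phi(s,\mathcal{A})\subseteq\mathcal{A}_{\mathrm{safe}}(s)$, so $\pi_{\mathrm{exec}}\in\Pi_{\mathrm{safe}}$ by \autoref{prop:maximal-admissible}. (ii) Value preservation across the filter: I would show by a change-of-variables argument that the trajectory distribution induced on $\mathcal{M}_{\mathrm{SC}}$ by $\pi_{\mathrm{exec}}$ starting at $s\in\Omega^*$ coincides with the trajectory distribution induced on $\mathcal{M}_\phi$ by $\pi^\varepsilon_\phi$ from the same initial state, because a single transition under $\pi_{\mathrm{exec}}$ samples $b\sim\pi_\phi^\varepsilon(\cdot\mid s)$ then returns $\phi(s,b)$ and then transitions by $\mathbb{P}$, which is exactly one step of $\mathcal{M}_\phi$ by definition of $\mathbb{P}_\phi$ and $r_\phi$. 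Iterating and summing discounted rewards gives $V^{\pi_{\mathrm{exec}}}_{\mathcal{M}_{\mathrm{SC}}}(s)=V^{\pi^\varepsilon_\phi}_{\mathcal{M}_\phi}(s)$. (iii) Equality of optimal values: $V^*_{\mathcal{M}_{\mathrm{SC}}}=V^*_{\mathcal{M}_\phi}$ on $\Omega^*$. For $\le$, any $\pi\in\Pi_{\mathrm{safe}}$ only plays actions in $\mathcal{A}_{\mathrm{safe}}(s)$, on which $\phi$ acts as the identity (part 2 of \autoref{def:perfect_sf}), so $\pi$ induces the same value on $\mathcal{M}_\phi$ as on $\mathcal{M}_{\mathrm{SC}}$. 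For $\ge$, apply the pushforward construction to any policy on $\mathcal{M}_\phi$ and invoke substep (ii). Chaining (ii) and (iii) with the $\varepsilon$-optimality hypothesis yields the desired bound.

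\textbf{Main obstacle.} The most delicate point is making the pushforward argument rigorous under the Borel-measurability hypotheses of \autoref{ass:standing_formal}: I must check that $\pi_{\mathrm{exec}}$ as defined in \eqref{eq:executed_policy} is itself a valid Borel-measurable stochastic kernel on $\mathcal{A}$ given $\Omega^*$, and then that the induced trajectory laws genuinely coincide as probability measures on $(\mathcal{S}\times\mathcal{A})^{\mathbb{N}}$. This boils down to checking that for every bounded Borel $f$, $\int f(a')\,\pi_{\mathrm{exec}}(da'\mid s)=\int f(\phi(s,a))\,\pi^\varepsilon_\phi(da\mid s)$ (the defining change-of-variables identity for pushforwards), and then invoking the Ionescu--Tulcea theorem to extend the one-step equality to the full trajectory measure. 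Everything else is essentially bookkeeping given \autoref{prop:maximal-admissible}.
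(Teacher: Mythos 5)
Your proposal is correct, and for claims 1 and 2 it coincides with the paper's own proof: claim 1 is the same induction on $t$ using safe initialization and the filter's range condition $\phi(s,a)\in\mathcal{A}_{\mathrm{safe}}(s)$, and claim 2 is the same verification that $\mathcal{M}_\phi$ is a stationary discounted MDP with bounded rewards so that $\textsc{Alg}$'s hypotheses transfer under the substitution $(\mathbb{P},r)\mapsto(\mathbb{P}_\phi,r_\phi)$. For claim 3, your decomposition---(i) $\pi_{\mathrm{exec}}\in\Pi_{\mathrm{safe}}$, (ii) $V^{\pi_{\mathrm{exec}}}_{\mathcal{M}_{\mathrm{SC}}}=V^{\pi^\varepsilon_\phi}_{\mathcal{M}_\phi}$ by change of variables on the trajectory law, (iii) $V^*_{\mathcal{M}_{\mathrm{SC}}}=V^*_{\mathcal{M}_\phi}$ on $\Omega^*$---matches the paper in (i) and (ii), but your (iii) takes a genuinely different route. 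The paper argues at the operator level: it writes the Bellman optimality operators $\mathcal{T}^*_{\mathcal{M}_\phi}$ and $\mathcal{T}^*_{\mathcal{M}_{\mathrm{SC}}}$, observes that since $a\mapsto\phi(s,a)$ surjects onto $\mathcal{A}_{\mathrm{safe}}(s)$ and is the identity there, the two operators agree pointwise, and concludes equality of optimal values from uniqueness of the fixed point of a $\gamma$-contraction on $\bigl(\mathcal{B}_b(\Omega^*),\|\cdot\|_\infty\bigr)$. You instead use a direct two-sided policy correspondence: every $\pi\in\Pi_{\mathrm{safe}}$ only plays actions on which $\phi$ is the identity and hence attains the same value in $\mathcal{M}_\phi$ (giving one inequality), and every $\mathcal{M}_\phi$-policy pushes forward to an element of $\Pi_{\mathrm{safe}}$ with equal value by your substep (ii) (giving the other). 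Your route is more elementary---it never invokes the Bellman operator, the contraction property, or the identification of the supremal value with a fixed point (which in Borel state spaces carries measurable-selection caveats the paper does not address)---and it reuses the change-of-variables machinery already needed for (ii); the paper's operator argument is shorter and makes the identity ``supremum over $\mathcal{A}$ post-composed with $\phi$ equals supremum over $\mathcal{A}_{\mathrm{safe}}(s)$'' especially transparent. Your closing concerns about kernel measurability of $\pi_{\mathrm{exec}}$ and the Ionescu--Tulcea extension are points the paper elides entirely, so attending to them would make your write-up strictly more rigorous than the published proof.
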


\begin{proof}
(\emph{Safe learning})
Since all training episodes begin in $\Omega^*$ (\autoref{ass:standing_formal}) and $\phi(s, a)$ renders $\Omega^*$ invariant for all $a\in\mathcal{A}$ produced by any task policy (\autoref{def:perfect_sf}, \autoref{ass:standing_formal}), all filtered trajectories remain in $\Omega^*$ for all time.

(\emph{Convergence})
From~\autoref{ass:standing_formal} and \autoref{def:filtered_mdp}, both $\mathbb{P}_\phi$ and $r_\phi$ are time-invariant and $r_\phi$ is bounded. Thus, the filtered \gls{MDP} $\mathcal{M}_\phi$ is a stationary discounted \gls{MDP} with bounded rewards. The iterates of $\textsc{Alg}$ depend only on the executed tuples
\begin{equation*}
    (s_t, a_t, r_t, s_{t+1}) \quad \text{with} \quad r_t=r_\phi(s_t, a_t),\quad s_{t+1}\sim\mathbb{P}_\phi(\cdot\mid s_t, a_t),
\end{equation*}
and on the stepsize/exploration schedule of $\textsc{Alg}$. Every step in the convergence proof of $\textsc{Alg}$ on $\mathcal{M}$ applies to $\mathcal{M}_\phi$ with the substitution $(\mathbb{P}, r)\mapsto(\mathbb{P}_\phi, r_\phi)$. 
Therefore $\textsc{Alg}$ converges on $\mathcal{M}_\phi$.
This carry-over presumes the same algorithmic side conditions—e.g., stepsize conditions and ergodicity/coverage under the executed policy—also hold in $\mathcal{M}_\phi$. It covers, for example, almost-sure convergence of tabular Q-learning~\cite{watkins1992q,tsitsiklis1994asynchronous}, two-timescale actor–critic~\cite{konda1999actor}, and soft policy-iteration convergence for \gls{SAC}~\cite{haarnoja2018soft}.

(\emph{Optimality under safety filtering})
We begin with showing the equivalence between $V^*_{\mathcal{M}_\phi}$ and $V^*_{\mathcal{M}_\mathrm{SC}}$. We define the optimal Bellman operators for the filtered \gls{MDP} $\mathcal{M}_\phi$ and the \gls{SC-MDP} $\mathcal{M}_\mathrm{SC}$, respectively:
\begin{align*}
    (\mathcal{T}^*_{\mathcal{M}_\phi}V)(s)&=\sup_{a\in\mathcal{A}}\Bigl\{r_\phi\bigl(s, a\bigr)+\gamma\,\mathbb{E}_{s'\sim\mathbb{P}_\phi(\cdot\mid s,a)}[V(s')]\Bigr\}
    \\
    &=\sup_{a\in\mathcal{A}}\Bigl\{r\bigl(s,\phi(s,a)\bigr)+\gamma\,\mathbb{E}_{s'\sim\mathbb{P}(\cdot\mid s,\phi(s,a))}[V(s')]\Bigr\}, \\
    (\mathcal{T}^*_{\mathcal{M}_\mathrm{SC}}V)(s)&=\sup_{a\in\mathcal{A}_{\mathrm{safe}}(s)}\Bigl\{r(s,a)+\gamma\,\mathbb{E}_{s'\sim\mathbb{P}(\cdot\mid s,a)}[V(s')]\Bigr\}.
\end{align*}
From \autoref{def:perfect_sf}, the image of the map $a\mapsto \phi(s,a)$ is exactly $\mathcal{A}_{\mathrm{safe}}(s)$ and $\phi$ is the identity on that image. Taking the supremum over $a\in\mathcal{A}$ post-composition by $\phi$ equals the supremum over $a\in\mathcal{A}_{\mathrm{safe}}(s)$. Therefore, we have
\begin{equation*}
    \mathcal{T}^*_{\mathcal{M}_\phi}=\mathcal{T}^*_{\mathcal{M}_\mathrm{SC}},\qquad\text{pointwise on }\Omega^*.
\end{equation*}
Since both operators are $\gamma$-contractions on $(\mathcal B_b(\Omega^*),\|\cdot\|_\infty)$—the Banach space of bounded Borel-measurable value functions $V:\Omega^*\to\mathbb R$ endowed with the sup norm—they have the same unique fixed point:
\begin{equation*}
    V^*_{\mathcal{M}_\phi}(s)=V^*_{\mathcal{M}_\mathrm{SC}}(s),\qquad \forall s\in\Omega^*.
\end{equation*}
We now show the equivalence between $V^{\pi^\varepsilon_\phi}_{\mathcal{M}_\phi}$ and $V^{\pi_{\mathrm{exec}}}_{\mathcal{M}_{\mathrm{SC}}}$. We unroll the expectation and apply the definition of the filtered \gls{MDP} $\mathcal{M}_\phi$ (\autoref{def:filtered_mdp}):
\begin{align*}
    V^{\pi^\varepsilon_\phi}_{\mathcal{M}_\phi}(s)&=\mathbb E\!\left[\sum_{t=0}^{\infty}\gamma^{t}\,r_{\phi}(s_t,a_t)\ \middle|\ s_0=s,\ a_t\sim\pi^\varepsilon_\phi(\cdot\mid s_t),\ s_{t+1}\sim\mathbb P_{\phi}(\cdot\mid s_t,a_t)\right]\\
    &=\mathbb E\!\left[\sum_{t=0}^{\infty}\gamma^{t}\,r\bigl(s_t,\phi(s_t, a_t)\bigr)\ \middle|\ s_0=s,\ a_t\sim\pi^\varepsilon_\phi(\cdot\mid s_t),\ s_{t+1}\sim\mathbb P\bigl(\cdot\mid s_t,\phi(s_t, a_t)\bigr)\right].
\end{align*}
Let $\tilde{a}_t\coloneqq\phi(s_t, a_t)$, where $a_t\sim\pi^\varepsilon_\phi(\cdot\mid s_t)$. The executed policy $\pi_\mathrm{exec}$ is defined as the pushforward of the $\varepsilon$-optimal policy $\pi^\varepsilon_\phi$ by the map $a\mapsto\phi(s, a)$. Therefore, $\tilde{a}_t\sim\pi_\mathrm{exec}(\cdot\mid s_t)$, and we have 
\begin{align*}
    V^{\pi^\varepsilon_\phi}_{\mathcal{M}_\phi}(s)&=\mathbb E\!\left[\sum_{t=0}^{\infty}\gamma^{t}\,r(s_t,\tilde{a}_t)\ \middle|\ s_0=s,\ \tilde{a}_t\sim\pi_\mathrm{exec}(\cdot\mid s_t),\ s_{t+1}\sim\mathbb P(\cdot\mid s_t,\tilde{a}_t)\right]\\
    &=V^{\pi_{\mathrm{exec}}}_{\mathcal{M}}(s),\qquad \forall s\in\Omega^*.
\end{align*}
By the perfect safety filter property (\autoref{def:perfect_sf}) and the pushforward definition of $\pi_\mathrm{exec}$, we have $\support\pi_\mathrm{exec}(\cdot\mid s)\subseteq\mathcal{A}_\mathrm{safe}(s)$ for all $s\in\Omega^*$. Therefore, $\pi_\mathrm{exec}\in\Pi_\mathrm{safe}$, and this gives
\[
V^{\pi_{\mathrm{exec}}}_{\mathcal{M}}(s)=V^{\pi_{\mathrm{exec}}}_{\mathcal{M}_{\mathrm{SC}}}(s),\qquad \forall s\in\Omega^*.
\]
Finally, combining the equalities yields
\[
V^{\pi_{\mathrm{exec}}}_{\mathcal{M}_{\mathrm{SC}}}(s)
=
V^{\pi^\varepsilon_\phi}_{\mathcal{M}_\phi}(s)
\ \ge\
V^*_{\mathcal{M}_\phi}(s)-\varepsilon
=
V^*_{\mathcal{M}_{\mathrm{SC}}}(s)-\varepsilon,\qquad \forall s\in\Omega^*.
\]
This proves the $\varepsilon$-optimality and the safety of $\pi_\mathrm{exec}$ on $\mathcal{M}_\mathrm{SC}$. 
\end{proof}

\section{Implementation Details} \label{app:implementation}
\p{Model architecture}
The SAC task policy and safety policy have the same architecture, with the actor and critic policies implemented by a fully connected feedforward neural network with 2 hidden layers of $256$ neurons. The CPO task policy has 2 hidden layers of $64$ neurons. All policies use \emph{ReLU} activations for SAC-based architectures and \emph{Tanh} activations for CPO.

The safety policy is trained with a learning rate $1\times10^{-5}$, replay buffer size of $2\times10^5$, batch size of $256$, discount factor $\gamma = 0.995$, and soft update coefficient $\tau=0.01$, for a total of $2\times10^6$ steps. 

The task policy SAC is trained with a learning rate $3\times10^{-4}$, replay buffer size of $1\times10^5$, batch size of $256$, $\gamma=0.99$, and $\tau=0.01$. 

The CPO policy is trained with a learning rate of $3\times10^{-4}$, $\gamma=0.99$, and KL-divergence step size of $\delta_\text{KL}=0.01$.

\p{Safety filter implementation}
For rollout filtering, we adopt the rollout evaluation procedure described in Nguyen et al.~\cite{nguyen2025gameplay}. We use a finite rollout horizon of $H=100$ and define the target margin function $l(s)$ and stop policy $\pi_\text{stop}$ as

\begin{align*}
l(s) &= \eta - \sqrt{v_x^2 + v_y^2}, \qquad 
\pi_{\text{stop}}(a \mid s) = \mathbf{0}
\end{align*}

where $\eta$ denotes the target safety velocity threshold. Although the safety policy was not explicitly trained to bring the robot to a complete stop, it implicitly learned to reduce the robot's velocity near obstacles, either by slowing down to a complete stop or rotating in place until the obstacles are out of sight. This emergent behavior motivates the design of the above target margin function and stop policy as the fallback policy when $l(s) > 0$. We choose $\eta = 0.01$ in our experiments.

For value-based filtering, we run the experiment across a sweep of $\epsilon$ values, and choose the $\epsilon$ with the highest return and lowest total violations. Specifically, we choose $\epsilon_\text{goal}=0.4$, and $\epsilon_\text{circle}=0.1$.

\p{State and action spaces}
The \emph{Circle} task and the \emph{Goal} tasks each employ ego-centric proprioceptive observations, consisting of accelerometer, velocity, angular rate, magnetic field, rear ball rotation, and local LiDAR readings. The observation space has $40$ and $72$ dimensions, respectively, while the continuous action space is similar across both tasks:

\begin{align*}
    s_{\text{\emph{Goal}}} &:= \Big[
        a_{x, y, z}, v_{x, y, z}, \;
        \omega_{x, y, z}, m_{x, y, z}\;
        \dot{b}_{x,r}, \dot{b}_{y,r}, \dot{b}_{z,r}, \;
        q_{b,r}^{(3\times3)}, \;
        \ell^{\text{goal}}_{1:16}, \;
        \ell^{\text{pillar}}_{1:16}, \;
        \ell^{\text{wall}}_{1:16}
    \Big], \\[4pt]
    s_{\text{\emph{Circle}}} &:= \Big[
        a_{x, y, z}, v_{x, y, z}, \;
        \omega_{x, y, z}, m_{x, y, z}\;
        \dot{b}_{x,r}, \dot{b}_{y,r}, \dot{b}_{z,r}, \;
        q_{b,r}^{(3\times3)}, \;
        \ell^{\text{sigwall}}_{1:16}
    \Big], \\[4pt]
    a &:= \Big[
        \tau_L, \tau_R
    \Big],
\end{align*}

where $(a_{x, y, z})$, $(v_{x, y, z})$, and $(\omega_{x, y, z})$ denote accelerometer, velocimeter, and gyroscope readings; 
$(m_{x, y, z})$ the magnetometer readings; $\dot{b}_{(\cdot),r}$ the rear ball angular velocity; 
$q_{b,r}^{(3\times3)}$ the rear ball orientation matrix; and 
$\ell^{(\cdot)}_{1:16}$ the 16-beam LiDAR signals for different object classes (goal, wall, pillar). The action vector $a=[\tau_L,\tau_R]$ applies continuous torques to the left and right wheels, each bounded within $[-1,1]$.

\p{Goal task margin function}
Let $z^{\text{pillar}}_{i},z^{\text{wall}}_{i}\in[0,1]$ denote 16-beam LiDAR
\emph{proximities} (1 = very close, 0 = far) for pillars and walls,
respectively. With LiDAR max range $R>0$ and safety clearance $\delta>0$, convert to distances
\[
d^{\text{pillar}}_{i}=(1-z^{\text{pillar}}_{i})\,R,\qquad
d^{\text{wall}}_{i}=(1-z^{\text{wall}}_{i})\,R,
\]
and define the minimum distance to safety-critical objects
\[
d_{\min}(s)\;=\;\min_i\Big\{\,d^{\text{pillar}}_{i},\,d^{\text{wall}}_{i}\,\Big\}.
\]
The margin is
\[
g_\text{goal}(s)\;=\;d_{\min}(s)\;-\;\delta.
\]
In our experiments we use $R=3.0$\,m and $\delta=0.02$\,m. The episode
terminates if $g_{\text{goal}}(s)<0$ or upon explicit collision.

\p{Circle task margin function}
Let $x(s)\in\mathbb{R}^2$ be the robot planar position and
$\mathcal{W}=\{W_k\}$ the set of signal-wall line segments (axis-aligned).
Define the point–segment distance
\[
d(x,W_k)\;=\;\min_{p\in W_k}\,\|x-p\|_2,\qquad
d_{\min}(s)\;=\;\min_{k} d\big(x(s),W_k\big).
\]
The margin is
\[
g_{\text{circle}}(s)\;=\;d_{\min}(s)\;-\;\delta.
\]

We use $R=6.0$\,m and $\delta=0.02$\,m. The episode terminates if $g_{\text{circle}}(s)<0$ or upon explicit collision.

\end{document}